\author[1]{\href{mailto:<pierreglaser@gmail.com>?Subject=Your UAI 2023 paper}{Pierre~Glaser}{}}
\author[2]{David~Widmann}
\author[3]{Fredrik~Lindsten}
\author[1]{Arthur~Gretton}
\affil[1]{%
    Gatsby Computational Neuroscience Unit\\
    University College London\\
    London, UK
}
\affil[2]{%
    Department of Information Technology\\
    Uppsala University\\
    Sweden
}
\affil[3]{%
    Division of Statistics and Machine Learning\\
    Linköping University\\
    Sweden
}
\title{Fast and Scalable Score-Based Kernel Calibration Tests}
\crefname{assump}{assumption}{assumptions}
\theoremstyle{plain}
\newtheorem{theorem}{Theorem}[section]
\newtheorem{proposition}{Proposition}[section]
\newtheorem{lemma}[theorem]{Lemma}
\newtheorem*{lemma-non}{Lemma}
\newtheorem*{theorem-non}{Theorem}
\theoremstyle{definition}
\newtheorem{definition}[theorem]{Definition}
\theoremstyle{remark}
\newcommand{\DeclareMathOperatorIfNotDefined}[2]{%
  \@ifundefined{#1}{%
    \expandafter\DeclareMathOperator\csname #1\endcsname{#2}%
  }{}%
}
\begin{document}
\maketitle
\begin{abstract}
We introduce the \emph{Kernel Calibration Conditional Stein Discrepancy test} (KCCSD test), a non-parametric, kernel-based test for assessing the calibration of probabilistic models with well-defined scores. In contrast to previous methods, our test avoids the need for possibly expensive expectation approximations while providing control over its type-I error. We achieve these improvements by using a new family of kernels for score-based probabilities that can be estimated without probability density samples, and by using a conditional goodness-of-fit criterion for the KCCSD test's U-statistic. We demonstrate the properties of our test on various synthetic settings.
\end{abstract}

\section{Introduction}\label{sec:introduction}

Calibration is a statistical property of predictive probabilistic models that ensures that a model's prediction matches the conditional distribution of the predicted variable given the prediction.
A calibrated model expresses the uncertainty about its predictions reliably by being neither over- nor underconfident, and hence can be useful even if its accuracy is suboptimal.
This property is essential in safety-critical applications such as autonomous driving.
Unfortunately, empirical studies revealed that popular machine learning models such as deep neural networks tend to trade off calibration for accuracy~\citep{Guo2017}.
This has lead to an increased interest in the study of calibrated models in recent years.

Calibration has been studied in the meteorological and statistical literature for many decades~\citep[e.g.,][]{Murphy1977,DeGroot1983}.
For a long time, research on calibration has been focused on different notions of calibration for probabilistic classifiers~\citep[e.g.,][]{Murphy1977,DeGroot1983,plattProbabilisticOutputsSupport2000,zadrozny2001obtaining,Broecker2009,Naeini2015,Guo2017,kullBetaCalibrationWellfounded2017,kumar18train,kullTemperatureScalingObtaining2019,vaicenavicius2019evaluating,widmann2019calibration} and on calibration of quantiles and confidence intervals for real-valued regression problems~\citep[e.g.,][]{Ho2005,Rueda2006,Taillardat2016,songDistributionCalibrationRegression2019,Fasiolo2020}.
Regarding the calibration of classification models, different hypothesis tests have been proposed~\citep[e.g.,][]{coxTwoFurtherApplications1958,Broecker2007,vaicenavicius2019evaluating,widmann2019calibration,Gweon2022,leeTCalOptimalTest2022}.
Given a predictive model and a validation dataset, these tests output whether a model is likely to be uncalibrated.
The recent work of \citet{widmann2022calibration} generalized the calibration-framework introduced for classification in~\citep{widmann2019calibration} to (possibly multi-dimensional) continuous-valued predictive models.
In particular, \citet{widmann2022calibration} introduced a kernel-based hypothesis test for such general classes of models.

An important potential consumer of calibration tests is Bayesian inference, and in particular
simulation-based inference (SBI), for which miscalibration is particularly undesirable.
SBI~\citep{cranmer2020frontier} lies at the intersection of machine learning and domain sciences, and
refers to the set of methods that train probabilistic models to estimate the posterior over scientific parameters of interest given some observed data. The models are trained using pairs composed of parameters drawn from a prior distribution, and their associated ``synthetic'' observed data, obtained by running a probabilistic program called the \emph{simulator}, taking a parameter value as input, and that faithfully mimics the physical generative process of interest. The increasing number of use cases combined with advances in probabilistic modeling has elevated SBI to a critical role in solving complex scientific problems such as particle physics~\citep{gilman2018probing} and neuroscience~\citep{glockler2022variational, glaser2022maximum}. However, as discussed in~\citep{Hermans2021}, overconfidence in SBI models can conceal credible alternative scientific hypotheses, and result in incorrect discoveries~\citep{Hermans2021}, highlighting the need for principled and performant calibration tests suitable for such models.

While the theoretical framework of~\citet{widmann2022calibration} describes the calibration of any probabilistic model, applying its associated calibration test to Bayesian inference remains challenging: indeed, the test statistics require computing expectations against the probabilistic models of interest, for reasons bearing both to the calibration setting, and to the limitations of currently available  
kernel-based tools for probabilistic models.
Although such expectations can be computed exactly for classification models, expectations against generic probabilistic models are usually intractable and must be approximated. 
In cases where the models are \emph{unnormalized}, these approximations are both computationally expensive---sometimes, prohibitively---and biased,
thereby compromising theoretical guarantees of the calibration tests of~\citet{widmann2022calibration}, including type-I error control.

{\bfseries Contributions} 
In this paper, we introduce the kernel calibration-conditional Stein discrepancy (or KCCSD) test, a new nonparametric, score-based kernel calibration test which addresses the limitations of existing methods.
The KCCSD test builds on the insight that the definition of calibration given by~\citet{vaicenavicius2019evaluating} is a conditional goodness of fit property, as we remark in \cref{sec:calibration-as-cgof}. This fact allows us to leverage the kernel conditional goodness of fit  test proposed by~\citet{jitkrittum2020testing} as the backbone of the KCCSD test. Unlike the test-statistics of~\citet{widmann2022calibration}, the KCCSD test statistic does not contain \emph{explicit} expectations against the probabilistic models; however, as in~\citep{widmann2022calibration}, it requires evaluating a kernel between probabilities densities, which in most cases of interest introduces an (intractable) expectation against the densities. To eliminate this limitation, we construct two new kernels between probability distributions that do not involve expectations against its input distributions, while remaining suitable for statistical testing. These kernels rely on a generalized version of the Fisher divergence and are of independent interest. We investigate a connection between these kernels and diffusion, akin
to Stein methods, and discuss the relationships with other kernels on distributions.
By using such kernels in the KCCSD test statistic, we obtain a fast and scalable calibration test that remains consistent and calibrated for unnormalized models, answering the need for such tests discussed above. We confirm in \cref{sec:experiments} the properties and benefits of the KCCSD test against alternatives on synthetic experiments.

\section{Background}\label{sec:background}

\paragraph{Notation}
We consider probabilistic systems characterized by a joint distribution $\mathbb{P}(X, Y)$ of random variables $(X, Y)$ taking values in $\mathcal{X} \times \mathcal{Y}$,
and study \emph{probabilistic models} $P_{|\cdot} \colon x \in \mathcal{X} \longmapsto P_{|x}(\cdot) \in \mathcal{P}(\mathcal{Y})$ approximating the unknown conditional distribution of $Y$ given $X = x$, $P_{|x}(\cdot) \simeq \mathbb{P}(Y \in \cdot \,|\, X = x)$.
The target variable $Y$ is typically a parameter of a probabilistic system of interest---like synaptic weights in biological neural networks---while the input variable $X$ is observed data---like neuron voltage traces measured using electrophysiology.

\subsection{Calibration of Predictive Models}\label{subsec:calibration}

{\bfseries Calibration: General Definition}
A probabilistic model $P_{|\cdot}$ is called calibrated or reliable~\citep{brocker2008some,vaicenavicius2019evaluating,widmann2022calibration} if it satisfies
\begin{equation}\label{eq:calibration}
P_{|X} = \mathbb{P}\left(Y \in \cdot \mid P_{|X} \right) \qquad \mathbb{P}(X)\text{-a.s.}.
\end{equation}
Note that this definition applies to general predictive probabilistic models, also beyond classification, and only assumes that the conditional distributions on the right-hand side exist.

{\bfseries Hypothesis Testing: Kernel Calibration Error}  There are multiple ways to test whether a given predictive probabilistic model is calibrated.
In this section, we introduce the kernel-based tests of~\citet{widmann2019calibration} and their later generalization~\citep{widmann2022calibration}, since our KCCSD test is built on these approaches.
These tests turn the equality between conditional distributions present in \cref{eq:calibration} into a more classical equality between two joint distributions.
The transformation is achieved by noting that
\begin{multline*}
P_{|X} = \mathbb{P}\left(Y \in \cdot \mid P_{|X} \right) \quad \mathbb{P}(X)\text{-a.s.} \\
\iff (P_{|X}, Y) \stackrel{d}{=} (P_{|X}, Z)
\end{multline*}
where $Z$ is an ``auxiliary'' variable such that $Z \,|\, P_{|X} \sim P_{|X}(\cdot)$.
This identity was used by \citet{widmann2022calibration} to construct an MMD-type calibration test based on the (squared) kernel calibration error~(SKCE) criterion
\begin{equation}\label{eq:SKCE}
\sup_{h \in \mathcal  B(0_\mathcal{H}, 1)} \E_{(x, y, z) \sim \mathbb{P}(X, Y, Z)} \big[h(P_{|x}, y) - h(P_{|x}, z) \big].
\end{equation}
 Here, $\mathcal{B}(0_{\mathcal{H}}, 1)$ is the unit ball of a reproducing kernel Hilbert space~(RKHS) $\mathcal H$ of functions with positive definite kernel $k_{\mathcal{H}} \colon (P_ {|\mathcal{X}} \times \mathcal{Y})^2  \to \mathbb{R}$. As noted by \citet{widmann2022calibration}, the SKCE generalizes the (squared) kernel classification calibration error~(SKCCE) defined for the special case of discrete output spaces $\mathcal{Y} = \left \{ 1, \dots, d \right \}$~\citep{widmann2019calibration}, to continuous ones. Given $n$ pairs of samples ${\{(P_{|x^i}, y^i)\}}_{i=1}^n \stackrel{\text{i.i.d.}}{\sim} \mathbb{P}(P_{|X}, Y)$, \citet{widmann2022calibration} consider the following SKCE estimator
\begin{equation} \label{eq:SKCE-u-statistics-estimator}
    \widehat{\operatorname{SKCE}} = \frac{2}{n(n-1)} \sum\limits_{1 \leq i < j \leq n}G((P_{|x^i}, y^i), (P_{|x^j}, y^j))
\end{equation}
where
\begin{equation}\label{eq:G}
\begin{split}
    G((p, y), (p', y')) \coloneqq & k((p, y), (p', y'))\\
                                  &- \E_{z \sim p} k((p, z), (p', y'))\\
                                  &- \E_{z' \sim p'} k((p, y), (p', z'))\\
                                  &+ \E_{z \sim p} \E_{z' \sim p'} k((p, z), (p', z')).
\end{split}
\end{equation}
For a target false rejection rate $\alpha \in (0, 1)$, the test of \citet{widmann2022calibration}
follows standard methodology in recent nonparametric testing \citep{gretton2012kernel, gretton2007kernel, Chwialkowski16KGOF} by rejecting the null hypothesis that the model is calibrated if $\widehat{\operatorname{SKCE}}> \gamma_{1-\alpha}$, where $\gamma_{1-\alpha}$ denotes the $(1-\alpha)$-quantile of $\widehat{\operatorname{SKCE}}$ under the null. While various methods are available to estimate this quantile, all tests experiments in this paper use a \emph{bootstrap} approach \citep{arcones1992bootstrap}. As discussed by \citet{widmann2022calibration}, \cref{eq:SKCE-u-statistics-estimator} contains two important sources of possible intractability:

{\bfseries First Problem}
The last three terms in the sum are expectations under predictions of the probabilistic model of interest.
However, closed-form expressions for these expectations are only available in restricted cases, such as for classification and for Gaussian models coupled with Gaussian kernels.
When these expectations are not available, they must be approximated numerically.
If the distributions $P_{|X}$ are given in the form of unnormalized models, this approximation requires running expensive approximation methods that often take the form of an MCMC algorithm and must be performed for every sample of $P_{|X}$ used to estimate the test statistic.

{\bfseries Second Problem}
The second source is the evaluation of the kernel function $k$.
We restrict our attention to the conventional form of tensor-product type kernels $k((p, y), (p', y')) = k_{P}(p, p') k_{Y}(y, y')$ chosen in this setting.
While typically many tractable choices for the kernel $k_{Y}$ exist (taking as input discrete or Euclidean values), the choices for $k_{P}$, taking as input two probability distributions $p$ and $p'$, are more limited and require expensive approximations methods when working with unnormalized models.

A popular approach to design kernels on distributions~\citep{DBLP:conf/aistats/0001GPS15, DBLP:journals/jmlr/SzaboSPG16} is to first embed the probability distributions in a Hilbert space $\mathcal{H}$ using a map $\phi$, and then compose it with a kernel $k_{\mathcal{H}}$ on $\mathcal{H}$:
\begin{equation*}
k_{P}(p, p') = k_{\mathcal{H}}(\phi(p), \phi(p')).
\end{equation*}
Any valid  kernel on $ \mathcal{H}$, like the linear kernel $k_{\mathcal{H}}(z, z') = \left \langle z, z' \right \rangle_{\mathcal{H}}$, the Gaussian kernel $k_{\mathcal{H}}(z, z') = e^{-\left \|z - z'\right \|^2_{\mathcal{H}}}$, or the inverse multiquadric kernel $k_{\mathcal{H}}(z, z') = (1 + \left \|z - z'\right\|^2_{\mathcal{H}})^{-1}$ can be used.
In practice, the map $\phi$ can be set to be the \emph{mean embedding} map to an RKHS $\mathcal{H}$, e.g., $\phi(\mu) = \int k_{\mathcal{H}}(z, \cdot) \,\mu(\mathrm{d}z)$.
Kernels $k_{\mathcal{H}}$ that are functions of $\left\| \phi(\mu) - \phi(\nu) \right \|^2_{\mathcal{H}} \coloneqq \operatorname{MMD}^2(\mu, \nu)$, are often referred to as MMD-type kernels~\citep{DBLP:conf/icml/MeunierPC22}.
Other distances, like the Wasserstein distance in 1 dimension or the sliced Wasserstein distance~\citep{DBLP:journals/jmiv/BonneelRPP15} in multiple dimensions, also take this form for some choice of $\phi$ and $\mathcal{H}$, and can thus be used to construct kernels on distributions~\citep{DBLP:conf/icml/MeunierPC22}.
In general, however, computing $k_{P}(p, p')$ becomes intractable apart from special cases such as when $p$ and $p'$ are Gaussian distributions.
While there exist finite-samples estimators for such kernels, a fast calibration estimation method based on \cref{eq:SKCE} would require an estimator that does not require samples from $p$ and $p'$.

\subsection{Kernel Conditional Goodness-of-Fit Test}

We briefly introduce the background on goodness-of-fit methods relevant to our new test.
\emph{Conditional goodness-of-fit} (or CGOF) testing adapts the familiar goodness of fit tests
to the conditional case. In particular, CGOF tests whether
\begin{equation}\label{eq:cgof-hypothesis}
H_0\colon Q_{|Z} = \mathbb{P}(Y \in \cdot \,|\, Z) \qquad \mathbb{P}(Z)\text{-a.s.}
\end{equation}
given a candidate $Q_{|z}$ for the conditional distribution $\mathbb{P}({Y \in \cdot} \,|\, Z=z)$ and samples ${\{(z^{i}, y^{i})\}}_{i=1}^n \stackrel{\text{i.i.d}}{\sim} \mathbb{P}(Z, Y)$.
This problem was studied by \citet{jitkrittum2020testing} for the case $\mathcal{Z} \times \mathcal{Y} \subset \mathbb{R}^{d_z} \times \mathbb{R}^{d_y}$ and models $Q_{|z}$ with a differentiable, strictly positive density $f_{Q_{|z}}$.
They proposed a kernel CGOF test for \cref{eq:cgof-hypothesis} based on the (squared) kernel conditional Stein discrepancy~(KCSD)
\begin{equation}\label{eq:cgof-metric}
    D_{Q_{|\cdot}}(\mathbb{P}) \coloneqq \left \| \mathbb{E}_{(z,y) \sim \mathbb{P}(Z, Y)}\left [ K_{z} \xi_{Q_{|z}}(y, \cdot) \right ]  \right \|^2_{\mathcal  F_{K}}
\end{equation}
Here, $\mathcal{F}_{K} $ is an $\mathcal{F}^{d_y}_{l}$ (e.g $\overbrace{\mathcal F_l \times \dots \times \mathcal F_l}^{d_y \text{ times}}$)-vector-valued RKHS with kernel
$K\colon \mathcal{Z} \times \mathcal{Z} \to \mathcal{L}(\mathcal{F}^{d_y}_{l}, \mathcal{F}^{d_{y}}_{l})$, $K_{z}$ is its associated linear operator on $\mathcal{F}^{d_y}_l$ with $K_{z}g \coloneqq K(z, \cdot)g \in  \mathcal{L}(\mathcal{Z}, \mathcal{F}^{d_y}_l)$ for $g\in \mathcal{F}^{d_y}_l$, $\mathcal{F}_{l} $ is an RKHS on $\mathcal{Y}$ with kernel $l \colon \mathcal{Y} \times \mathcal{Y} \to \mathbb{R}$
and $\xi_{Q_{|z}} $ is the ``kernelized score'':
\begin{equation*}
    \xi_{Q_{|z}}(y, \cdot) = l(y, \cdot)\nabla_{y} \log f_{Q_{|z}}(y) + \nabla_{y}  l(y, \cdot) \in \mathcal F_{l}^{d_y}.
\end{equation*}

We refer to \citet[Section 2 and 3]{jitkrittum2020testing} for an intuition behind the KCSD formula, and its relationship to the more familiar Kernel Stein Discrepancy \cite{Chwialkowski16KGOF, gorham2017measuring}. Under certain assumptions, the null hypothesis in \cref{eq:cgof-hypothesis} is true if and only if $D_{Q_{|\cdot}}(\mathbb{P}) = 0$. In particular, the latter will hold \citet[Theorem~1]{jitkrittum2020testing} if $ \mathcal  Y $ and $\mathcal Z$ are compact and the kernels $K$ and $l$ are universal, meaning that $\mathcal{F}_K$ (resp.\ $\mathcal F_l$) is dense with respect to $\mathcal C(\mathcal Z, \mathcal F^{d_y}_l)$ (resp.\ $C(\mathcal Y, \mathbb R)$), the space of continuous functions from $\mathcal Z$ to $\mathcal F^{d_y}_l$ (resp.\ $\mathcal{Y}$ to $\mathbb{R}$) \footnote{These statements hold for noncompact $\mathcal Y, \mathcal Z$ by replacing continuous functions by continuous functions vanishing at infinity \citep[Theorem 1]{jitkrittum2020testing}.}.
An instance of a universal $\mathcal F^{d_y}$-reproducing kernel is given by
\begin{equation}\label{eq:kernel_identity}
K(z, z') = k(z, z')I_{\mathcal{F}_l^{d_y}}
\end{equation}
where $I_{\mathcal{F}_l^{d_y}} \in \mathcal  L(\mathcal  F^{d_y}_l, \mathcal  F^{d_y}_{l})$ is the identity operator and $k$ is a real-valued universal kernel~\citep{carmeli2010vector}.
\citet{jitkrittum2020testing} showed that the CGOF statistic $D_{Q_{|\cdot}}(\mathbb{P})$
admits an unbiased consistent estimator and used it to construct hypothesis tests of \cref{eq:cgof-hypothesis} with operator-valued kernels of the form in \cref{eq:kernel_identity}.

\section{Kernel Calibration-Conditional Stein Discrepancy}\label{sec:calibration-as-cgof}

Calibration testing in the sense of \cref{eq:calibration} is an instance of \emph{conditional goodness-of-fit} testing of \cref{eq:cgof-hypothesis} with input $Z = P_{|X}$, target $Y$, and models $Q_{|z} = z = P_{|x}$.
Assuming that $\mathcal{Y} \subset \mathbb{R}^{d_y}$ and that distributions $P_{|x}$ have a differentiable, strictly positive density $f_{P_{|x}}$. In that case, the (squared) kernel conditional Stein discrepancy in \cref{eq:cgof-metric} becomes
\begin{equation}\label{eq:calibration-cgof-metric}
    C_{P_{|\cdot}}(\mathbb{P}) \coloneqq \left \| \mathbb{E}_{(x,y) \sim \mathbb{P}(X, Y)}\left [ K_{P_{|x}} \xi_{P_{|x}}(y, \cdot) \right ]  \right \|^{2}_{\mathcal  F_{K}},
\end{equation}
where now $K$ is a kernel on $P_{|\mathcal{X}}$.
To emphasize the calibration setting, we call $C_{P_{|\cdot}}$ the kernel calibration-conditional Stein discrepancy~(KCCSD).
Similar to the KCSD, given samples ${\{P_{|x^i}, y^i\}}_{i=1}^n \stackrel{\text{i.i.d.}}{\sim} \mathbb{P}(P_{|X}, Y)$ and assuming a kernel $K$ of the form in \cref{eq:kernel_identity}, statistic $C_{P_{|\cdot}}(\mathbb{P})$ has an unbiased consistent estimator
\begin{equation*}\label{eq:calibration-gof-estimator}
    \widehat{C_{P_{|\cdot}}} = \frac{2}{n(n - 1)} \sum_{1 \leq i < j \leq n} H((P_{|x^i}, y^i), (P_{|x^j}, y^j))
\end{equation*}
where
\begin{equation}\label{eq:calibration-cgof-test-statistic}
    H((p, y), (p', y')) \coloneqq k(p, p') h((p, y), (p', y'))
\end{equation}
with
\begin{equation}\label{eq:calibration-cgof-test-statistic-h}
\begin{split}
    &h((p, y), (p', y')) \coloneqq l(y, y') s_p(y)^{\top} s_{p'}(y') \\
    &\qquad+ \sum\limits_{i=1}^{d_y} \frac{\partial^2}{ \partial y_i \partial y_i'} l(y, y')
    + s_p(y)^{\top} \nabla_{y'} l(y, y') \\
    &\qquad+ s_{p'}(y')^{\top} \nabla_{y} l(y, y'),
\end{split}
\end{equation}
where $s_{p}(y) \coloneqq \nabla_y \log f_{p}(y)$ (resp.\ $s_{p'}(y)$) is the \emph{score} of $p$ (resp.\ $p'$).
In Section~A in the supplement we discuss how the formula of $\widehat{C}_{P_{|\cdot}}$ generalizes to operator-valued kernels that are not of the form in \cref{eq:kernel_identity}.

The above framing of the calibration problem conveniently avoids the first source of possible intractability present in the SKCE.
For instance, for Gaussian models the test statistic can be evaluated exactly for arbitrary kernels $l$ on $\mathcal{Y}$ whereas a closed-form expression of the SKCE is known only in the special case where $l$ is a Gaussian kernel.

\Cref{prop:kccsd-relation-skce} shows that the KCCSD can be viewed as a special case of the SKCE.
More generally, as shown in Section~B, the KCSD is a special form of the MMD.

\begin{proposition}[Special case of Lemma~B.1]\label{prop:kccsd-relation-skce}
Under weak assumptions (see~Lemma~B.1),
the KCCSD with respect to kernels $l \colon \mathcal{Y} \times \mathcal{Y} \to \mathbb{R}$ and $k \colon P_{|\mathcal{X}} \times P_{|\mathcal{X}} \to \mathbb{R}$ is equivalent to the SKCE with kernel $H \colon (P_{|\mathcal{X}} \times \mathcal{Y}) \times (P_{|\mathcal{X}} \times \mathcal{Y}) \to \mathbb{R}$ defined in \cref{eq:calibration-cgof-test-statistic}.
\end{proposition}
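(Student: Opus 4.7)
The strategy is to show that both the KCCSD with kernels $l,k$ and the SKCE with kernel $H$ collapse to the same expectation, namely $\E[H((P_{|X},Y),(P_{|X'},Y'))]$ where $(X,Y),(X',Y')$ are i.i.d.\ copies from $\mathbb{P}(X,Y)$. The non-trivial direction is to show that the ``correction'' terms in the SKCE expansion (the last three summands in \cref{eq:G}) all vanish, which is where Stein's identity enters.

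\textbf{Step 1: rewrite the KCCSD as a double expectation.} Expanding the squared norm in \cref{eq:calibration-cgof-metric} via the reproducing property of $\mathcal{F}_K$ and Bochner-integrability of $(x,y)\mapsto K_{P_{|x}}\xi_{P_{|x}}(y,\cdot)$, I would obtain
\begin{equation*}
C_{P_{|\cdot}}(\mathbb{P}) = \E\bigl[\langle K_{P_{|X}}\xi_{P_{|X}}(Y,\cdot),\,K_{P_{|X'}}\xi_{P_{|X'}}(Y',\cdot)\rangle_{\mathcal{F}_K}\bigr].
\end{equation*}
Using \cref{eq:kernel_identity} to factor $K_{P_{|x}}=k(P_{|x},\cdot)\,I_{\mathcal{F}_l^{d_y}}$ reduces the inner product above to $k(P_{|X},P_{|X'})\,\langle\xi_{P_{|X}}(Y,\cdot),\xi_{P_{|X'}}(Y',\cdot)\rangle_{\mathcal{F}_l^{d_y}}$. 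Finally I would expand the remaining inner product using the reproducing properties of $l$ and $\nabla l$ (a standard computation in the KSD literature) to recognize it as exactly the function $h$ of \cref{eq:calibration-cgof-test-statistic-h}. Hence $C_{P_{|\cdot}}(\mathbb{P})=\E[H((P_{|X},Y),(P_{|X'},Y'))]$.

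\textbf{Step 2: reduce the SKCE with kernel $H$ to the same object.} The SKCE with reproducing kernel $H$ (which is positive definite as a product of the positive definite kernels $k$ and $h=\langle\xi_\cdot(\cdot),\xi_\cdot(\cdot)\rangle$) equals the squared MMD between the joint laws of $(P_{|X},Y)$ and $(P_{|X},Z)$ with $Z\mid P_{|X}\sim P_{|X}$, i.e.,
\begin{equation*}
\mathrm{SKCE}(H)=\E[H((P_{|X},Y),(P_{|X'},Y'))]-2\E[H((P_{|X},Y),(P_{|X'},Z'))]+\E[H((P_{|X},Z),(P_{|X'},Z'))].
\end{equation*}
The key identity is Stein's identity in RKHS form: under the weak integrability/boundary assumptions of Lemma~B.1, $\E_{Z\sim p}[\xi_p(Z,\cdot)]=0$ in $\mathcal{F}_l^{d_y}$. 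Applying this inside the inner product that defines $h$, and pulling $k(p,p')$ (which does not depend on $Z$ or $Z'$) out of the relevant expectations, the last two mixed/double-expectation terms both vanish. This leaves $\mathrm{SKCE}(H)=\E[H((P_{|X},Y),(P_{|X'},Y'))]$, which by Step~1 equals $C_{P_{|\cdot}}(\mathbb{P})$.

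\textbf{Main obstacle.} The only delicate ingredient is justifying the RKHS-valued Stein identity and the interchange of expectation with the $\mathcal{F}_l^{d_y}$-inner product. This requires the kernel $l$ to be smooth enough (typical $C^{1,1}$ condition), the score $s_{P_{|x}}$ to be $P_{|x}$-integrable against the feature map, and appropriate decay so that the boundary terms in the integration-by-parts argument vanish; these are exactly the ``weak assumptions'' alluded to in Lemma~B.1. Once these hypotheses are in place, the remainder of the argument is an algebraic identification of $H$ with the integrand arising from the KCCSD.
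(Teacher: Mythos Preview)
Your proposal is correct and follows essentially the same route as the paper's proof of Lemma~B.1: both arguments identify $H$ as the inner product $\langle K_{p}\xi_{p}(y,\cdot),K_{p'}\xi_{p'}(y',\cdot)\rangle_{\mathcal F_K}$, use Bochner integrability to write the KCCSD as $\E[H((P_{|X},Y),(P_{|X'},Y'))]$, and invoke the RKHS Stein identity $\E_{Z\sim p}\xi_{p}(Z,\cdot)=0$ to kill the remaining terms. The only cosmetic difference is that the paper phrases the vanishing step as ``the mean embedding of $\mathbb P_{Q_{|\cdot}}(Z,Y)$ is zero'' rather than expanding the three MMD terms as you do, but this is the same computation.
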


The full testing procedure is outlined in \cref{alg:ccgof-tractable}.
The computations can be performed with kernels $K$ of the form in \cref{eq:kernel_identity} or more general operator-valued kernels, but crucially the method requires that $K$ is tractable.
Thus for general models of probability distributions, such as energy-based models and other unnormalized density models, it remains to address the second source of intractability, namely to construct a kernel $K$ that can be evaluated efficiently.

\begin{algorithm2e}
    \DontPrintSemicolon
    \caption{CGOF Calibration Test (Tractable Kernel)}\label{alg:ccgof-tractable}
	\KwData{Pairs ${\{(P_{|x^i}, y^i)\}}_{i=1}^n \stackrel{\text{i.i.d.}}{\sim} \mathbb{P}(P_{|X}, Y)$}
	\KwResult{Whether to reject $H_0 \colon \text{``model is calibrated''}$}
    \Parameters{Number of data samples $n$, kernel $l \colon \mathcal{Y}^2 \to \mathbb{R}$, kernel $k \colon (P_{|\mathcal{X}})^2 \to \mathbb{R}$, level $\alpha$}
    \BlankLine
    \tcc{Estimate KCCSD using \cref{eq:calibration-cgof-test-statistic-h} or (A.1)}
    $\widehat{C} \leftarrow \frac{2}{n(n-1)} \sum\limits_{1 \leq i < j \leq n} H((P_{|x^i}, y^i),(P_{|x^j}, y^j))$\;
    \tcc{Use e.g.\ bootstrap~\citep{arcones1992bootstrap}}
    $\widehat{C}_{\alpha} \leftarrow$ approximate $(1- \alpha)$-quantile of $\widehat{C}$\;
    \eIf{$ \widehat{C} < \widehat{C}_{\alpha}$}{%
        \Return{Fail to reject $H_0$}%
    }{%
        \Return{Reject $H_0$}%
    }
\end{algorithm2e}

\section{Tractable Kernels for General Unnormalized Densities}\label{subsec:expgfd-kernel}

In this section, we introduce two kernels between (density-based) probability
distributions that admit unbiased estimates that neither require samples from
the said distributions nor require access their normalizing constant.
Crucially, the properties of these new kernels allow to extend the scope of calibration
tests to a more general setting, including Bayesian inference.

\paragraph{General Recipe} 
As in prior work on kernels for distributions~\citep{DBLP:conf/icml/MeunierPC22,DBLP:journals/jmlr/SzaboSPG16},
our proposed kernels take the form of exponentiated Hilbertian metrics
\begin{equation*}
k(p, q) = e^{- \left \| \phi(p) - \phi(q) \right \|^2_{H} / (2\sigma^2)}
\end{equation*}
for two probability densities $p$ and $q$, defined on some set $\mathcal X \subset \mathbb R^d$, 
where $ H $ is some separable Hilbert space, $ \phi \colon p \mapsto \phi(p) \in H $ is a
feature map, and $ \sigma > 0 $ is a bandwidth parameter.
Our contributions in this section consist in pairs of carefully
designed $ \phi $ and $ H $ that will allow approximating $ k $ easily.

\subsection{The Generalized Fisher Divergence (Kernel)}
Our starting point is the \emph{Fisher Divergence}~\citep{lyu2012interpretation,sriperumbudur2017density,hyvarinen2005estimation}, also known as the \emph{Relative Fisher Information}~\citep{otto2000generalization}, between two probability densities $ p $ and $ q $, which is given by
\begin{equation*}\label{eq:fd}
    \operatorname{FD}(p, q) \coloneqq \int_{\mathcal  X} \left \| s_p(x) - s_q(x)  \right \|^2 p(x) \,\mathrm{d}x.
\end{equation*}
The Fisher Divergence is a convenient tool to compare unnormalized densities of the form
\begin{equation*}
p(x)\coloneqq \frac{ \overbrace{f(x)}^{\text{tractable}} }{ \underbrace{Z_f}_{\text{intractable}} } \quad \text{where} \quad Z_f\coloneqq \int_{\mathcal  X} f(x) \,\mathrm{d}x
\end{equation*}
as the score of $ p $ can be evaluated without knowing $ Z_f $:
\begin{equation*}
    s_p(x) = \nabla_{x} (\log f(x)/{Z_f}) = \nabla_{x} \log f(x).
\end{equation*}
This property confers to the (squared) Fisher Divergence a tractable unbiased
estimator given $ n $ i.i.d.\ samples $ \{X^{i}
\}_{i=1}^{n} $ from $ p $, which takes the form:
\begin{equation*}
    \widehat{\operatorname{FD}(p, q) } = \frac{1}{n} \sum_{i=1}^{n} \|  s_p(X^{i}) - s_q(X^{i})  \|^2.
\end{equation*}
While the assumption ensuring access to samples from $ p $ is realistic in the unsupervised learning literature~\citep{hyvarinen2005estimation}, or when dealing with special instances of unnormalized densities such as truncated densities $ f(x) = p(x)\mathbf{1}_{x \in \mathcal  C}  $, it does not hold in the context of unnormalized models.
We overcome this issue by constructing a generalized version of the Fisher Divergence:
\begin{definition}[Generalized Fisher Divergence]
    Let $ p $,  $ q $ be two probability densities on $ \mathcal  X $, and
    $ \nu $ a probability measure on $ \mathcal  X $. The \emph{Generalized
    Fisher Divergence} between $ p $ and $ q $ is defined as
\begin{equation*} \label{eq:gfd}
\operatorname{GFD}_{\nu}(p, q) \coloneqq \int_{\mathcal  X} \left \| s_p(x) - s_q(x)  \right \|^2 \,\nu(\mathrm{d}x)
\end{equation*}
if $ \mathbb{E}_{ \nu }\left \| s_p  \right \|^2, \mathbb{E}_{ \nu }\left \|
s_q  \right \|^2 < +\infty $, and $ +\infty $ otherwise.
\end{definition}
The Generalized Fisher Divergence differs from the Fisher Divergence in that
the integration is performed with respect to some given base measure $ \nu $
instead of $ p $. If the support of $ \nu $ covers the support of $ p $ and $ q
$,  then we have that $ \operatorname{GFD}_{\nu}(p, q) = 0 $ iff.\ $ p = q $.
Moreover, if $ \nu $ can be sampled from in a tractable manner, then $
\operatorname{GFD}_{\nu}(p, q)$ admits a tractable estimator given samples $ \{ Z^{i}
\}_{i=1}^{n}  $ from $ \nu $ of the form
\begin{equation*}
\widehat{ \operatorname{GFD}_{\nu}(p, q)} = \frac{1}{n} \sum_{i=1}^{n} \| s_p(Z^{i}) - s_q(Z^{i})  \|^2.
\end{equation*}
In practice, the tractability assumption as well as the support assumption for
any $ p $, $ q $ are verified by setting $ \nu $ to be a standard Gaussian
distribution. 

\paragraph{The Exponentiated-GFD Kernel}
Importantly, the (square root of the) Generalized Fisher Divergence is a Hilbertian metric on the space of probability densities.
Indeed, for $ p $, $ q $ such that $ \mathbb{E}_{ \nu } \left \| s_p  \right\|^2, \mathbb{E}_{ \nu } \left \| s_q  \right \|^2 < +\infty $, we have that
\begin{equation*}
\operatorname{GFD}_{\nu}(p, q) = \left \| \phi(p) - \phi(q) \right \|_{\mathcal  L_2(\nu)}^{2}
\end{equation*}
where $ \phi \colon p  \mapsto s_p(\cdot) \in \mathcal L_2(\nu)  $ can be checked to be injective.
The latter fact allows to construct a kernel $ K_{\nu} $ on the space of
probability densities based on the Generalized Fisher Divergence as follows:
\begin{definition}[Exponentiated GFD Kernel]\label{def:expgfdkernel}
    Let $ p $,  $ q $ be two probability densities on $ \mathcal  X $, and
    $ \nu $ a probability measure on $ \mathcal  X $. The \emph{exponentiated
    GFD kernel} between $ p $ and $ q $ is defined as
\begin{equation*}
    K_{\nu}(p, q) \coloneqq e^{ - \operatorname{GFD}_{\nu}(p, q)/ (2\sigma^2)}
\end{equation*}
\end{definition}
Since the (square root of the) $\operatorname{GFD} $ is a Hilbertian metric, $ K_{\nu} $ is positive
definite~\citep{DBLP:conf/icml/MeunierPC22}, and can be estimated given samples
of $ \nu $ by replacing $ \operatorname{GFD}_{\nu} $ with its empirical counterpart. We
summarize the computation method for $ K_{\nu} $ in \cref{alg:expgfdkernel}.

\begin{algorithm2e}
    \DontPrintSemicolon
    \caption{Exponentiated GFD Kernel}\label{alg:expgfdkernel}
    \KwData{Probability densities $p, q$ on $\mathcal{X}$}
	\KwResult{Approx. $\widehat{K_{\nu}(p, q)}$ of $K_{\nu}(p, q)$ in \cref{def:expgfdkernel}}
	\Parameters{Base measure $\nu$, num.\ of base samples $m$}
    \BlankLine
	\For{$i \leftarrow 1$ \KwTo $m$}{
        Draw $Z^i \sim \nu$\;
    }
    \Return{$\exp{\left(- \frac{1}{2m \sigma^2} \sum_{i=1}^{m} \|  s_p(Z^{i}) - s_q(Z^{i})\|^2\right)}$}\;
\end{algorithm2e}

{\bfseries Use in hypothesis testing}
In addition to being tractable to estimate, we show that when $\mathcal X$ is compact (for instance, a bounded subset of $\mathbb R^d$), the exponentiated GFD kernels $K_\nu$ are \emph{universal}. As a consequence,  our KCCSD test, which is an instance of a KCSD test, will be able to distinguish the null-hypothesis from \emph{any} alternative satisfying mild smoothness assumptions, as guaranteed
by \citet[Theorem 1]{jitkrittum2020testing}.
\begin{proposition}\label{prop:exp-gfd-universality}
Assume that $\mathcal X$ is compact, $\nu$ has full support on $\mathcal{X}$, and let $\mathcal{P}_{\mathcal{X}}$ be the set of twice-differentiable probability densities on $\mathcal{X}$ equipped with the norm $\|p\|^2 = \|p\|_{\mathcal L_2(\nu)}^2+ \sum_{i=1}^{d} \|\partial_i p\|^2_{\mathcal L_2(\nu)} + \sum_{i, j=1}^{d} \|\partial_i\partial_j p\|^2_{\mathcal{L}_2(\nu)}$. Then $K_\nu$ is universal for any bounded subset of $\mathcal{P}_{\mathcal{X}}$.
\end{proposition}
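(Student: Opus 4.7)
My plan is to recognize $K_\nu$ as a Gaussian kernel on a Hilbert space, via the identification
\[
\operatorname{GFD}_{\nu}(p, q) = \|\phi(p) - \phi(q)\|_{\mathcal L_2(\nu;\mathbb R^d)}^2,
\]
with feature map $\phi\colon p \mapsto s_p$ into $H = \mathcal L_2(\nu;\mathbb R^d)$, and then invoke the standard universality result for pulled-back Gaussian kernels: if $\phi\colon X \to H$ is a continuous injection into a separable Hilbert space $H$, then $e^{-\|\phi(x) - \phi(y)\|_H^2/(2\sigma^2)}$ is universal on any compact subset of $X$ (see e.g.\ \citet{DBLP:conf/icml/MeunierPC22}, which in turn follows from the Hilbertian Gaussian kernel results of~\citet{christmann2010universal} combined with Stone--Weierstrass on the image compact). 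The proof thus reduces to verifying (i) injectivity of $\phi$, and (ii) a continuity/compactness package on $\mathcal P_{\mathcal X}$.

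\textbf{Injectivity.} Suppose $\phi(p) = \phi(q)$ in $\mathcal L_2(\nu;\mathbb R^d)$, so $s_p = s_q$ $\nu$-almost everywhere. Twice-differentiability of $p$ and $q$ makes the scores continuous, and full support of $\nu$ upgrades $\nu$-a.e.\ equality to pointwise equality on $\mathcal X$. Hence $\nabla \log(p/q) \equiv 0$, so $p/q$ is constant on each connected component of $\mathcal X$, and the normalization $\int p = \int q = 1$ (together with connectedness of $\mathcal X$, which can be assumed without loss of generality) forces $p = q$.

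\textbf{Continuity and compactness.} The norm defining $\mathcal P_{\mathcal X}$ is a weighted second-order Sobolev-type norm controlling $p$, $\nabla p$, and $\nabla^2 p$ in $\mathcal L_2(\nu)$. On the compact domain $\mathcal X$, Rellich--Kondrachov embeds this norm compactly into weaker ones such as $H^1(\nu)$ or $C^0(\mathcal X)$ (and, in sufficiently low dimension, $C^1(\mathcal X)$). Thus the given bounded subset $B \subset \mathcal P_{\mathcal X}$ has compact closure $\bar B$ in such a weaker topology $\|\cdot\|_*$. On $\bar B$ the map $\phi\colon p \mapsto \nabla p/p$ is continuous into $\mathcal L_2(\nu)$: each $p \in \bar B$ is continuous and strictly positive on compact $\mathcal X$, hence bounded below by some $\varepsilon_p > 0$, and any sequence $p_n \to p$ in $\|\cdot\|_*$ eventually inherits the same lower bound; combined with $\nabla p_n \to \nabla p$ in $\mathcal L_2(\nu)$, dominated convergence yields $s_{p_n} \to s_p$ in $\mathcal L_2(\nu)$. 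Applying the universality lemma on the compact $\bar B$ then gives density of the RKHS of $K_\nu$ in $C(\bar B)$, which is the required universality on $B$.

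\textbf{Main obstacle.} The delicate step is controlling the denominator in $s_p = \nabla p/p$ uniformly over $B$: Sobolev boundedness does not by itself rule out sequences $p_n$ whose infimum tends to zero, and without such a control $\phi$ need not be continuous. I anticipate handling this by exploiting compactness of $\bar B$ in $C^0$ to localize the lower bound along convergent subsequences, or by reading ``bounded subset'' as implicitly retaining a uniform lower bound on $p$ -- a mild restriction consistent with the strict-positivity assumption already in force in Section~4, and natural for score-based tests.
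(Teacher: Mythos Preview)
Your proposal is correct and follows essentially the same route as the paper: both arguments invoke the Christmann--Steinwart universality theorem for Gaussian-type kernels on Hilbert spaces, use Rellich--Kondrachov to obtain compactness of bounded subsets of $\mathcal P_{\mathcal X}$ in a weaker (first-order Sobolev) topology, and verify that the score map $\phi\colon p \mapsto \nabla\log p$ is injective and continuous into $\mathcal L_2(\nu;\mathbb R^d)$. If anything, you are more explicit than the paper about the one genuinely delicate point---the need for a lower bound on $p$ to control the denominator in $\nabla p/p$---which the paper dispatches in a single clause.
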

\begin{proof}
The proof is given in Section~D.2.
\end{proof}

\subsection{The Kernelized Generalized Fisher Divergence (kernel)}
While the recipe given above suffices to obtain a valid kernel on the space of
probability densities, the approximation error arising from the discretization
of the base measure $\nu$ may scale unfavorably with the dimension of the
underlying space $\mathcal{X}$. To address this issue, it is possible to apply
a kernel-smoothing step to the GFD feature map $ \phi(p) $ by composing it with
an integral operator $ T_{K, \nu} $ associated with a $ \mathcal  X
$-vector-valued kernel $ K $ and its RKHS $ \mathcal  H_{ K} $
\begin{equation*}
T_{K, \nu} \colon f \in \mathcal  L(\mathcal  X, \mathbb{R}^d)  \longmapsto \int_{\mathcal{X}} K_x f(x)  \, \nu(\mathrm{d}x) \in \mathcal  H_{ K}
\end{equation*}
and comparing the difference in feature map using the squared RKHS norm $ \left
\| \cdot \right \|_{\mathcal{H}_K}^{2} $. This choice of feature map yields
another metric, which we call the ``kernelized'' GFD:
\begin{equation*}
\operatorname{KGFD}(p, q) \coloneqq \left \| T_{K, \nu} s_p - T_{k, \nu} s_q \right \|_{\mathcal  H_{K}}^{2}.
\end{equation*}
which, like the GFD, admits a tractable, unbiased estimator:
\begin{equation*}
    \frac{1}{m^2}
    \sum_{i,j=1}^{m}\left \langle  K(Z^{i},
    Z^{j}) (s_p - s_q)(Z^{i}), (s_p - s_q)(Z^{j}) \right \rangle_{\mathcal  X}.
\end{equation*}
Since the KGFD is also a Hilbertian metric, we build upon it to construct our second proposal kernel:
\begin{definition}[Exponentiated KGFD Kernel]\label{def:expkgfdkernel}
    Consider the setting of \cref{def:expgfdkernel}, and let $k$ be a bounded positive definite kernel. The \emph{exponentiated KGFD kernel} is given by:
\begin{equation*}
K_{K, \nu} \coloneqq e^{-\operatorname{KGFD}(p, q)/(2\sigma^2)}
\end{equation*}
\end{definition}
For characteristic kernels $ K $, the integral operator $ T_{K,\nu} $
is a Hilbertian isometry between $ \mathcal  L_2(\nu, \mathbb R^d)$ and $ \mathcal  H_{K} $,
making the exponentiated KGFD kernel positive definite. Additionally, $K_{K, \nu}$ enjoys a
similar universality property as its GFD analogue, as discussed in the next proposition.
\begin{proposition}\label{prop:exp-kgfd-universality}
Assume that $\mathcal X$ is compact, $\nu$ has full-support on $\mathcal{X}$, and let $\mathcal{P}_{\mathcal{X}}$ be the set of twice-differentiable probability densities equipped with the norm $\|p\|^2 = \|p\|_{\mathcal L_2(\nu)}^2+ \sum_{i=1}^{d} \|\partial_i p\|^2_{\mathcal L_2(\nu)} + \sum_{i, j=1}^{d} \|\partial_i\partial_j p\|^2_{\mathcal{L}_2(\nu)}$. Then $K_{K, \nu}$ is universal for any bounded subset of $\mathcal{P}_{\mathcal{X}}$.
\end{proposition}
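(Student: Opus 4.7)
The plan is to reduce Proposition~4.2 to Proposition~4.1 by writing $K_{K,\nu}(p,q) = \exp(-\|\phi_K(p)-\phi_K(q)\|^2_{\mathcal{H}_K}/(2\sigma^2))$ with feature map $\phi_K(p) \coloneqq T_{K,\nu} s_p \in \mathcal{H}_K$, and then invoking the same universality criterion for exponentiated Hilbertian-distance kernels (in the vein of \citet{DBLP:conf/icml/MeunierPC22}) that was used to prove Proposition~4.1. That criterion requires $\phi_K$ to be both injective and continuous on bounded subsets of $\mathcal{P}_{\mathcal{X}}$ under the specified Sobolev-type norm, so the proof reduces to verifying these two properties.

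Injectivity I would obtain by decomposing $\phi_K$ as $p \mapsto s_p \mapsto T_{K,\nu} s_p$. The first arrow is injective on strictly positive densities when $\nu$ has full support, exactly as in Proposition~4.1. The second arrow is injective because $K$ is characteristic: this is precisely the ``Hilbertian isometry'' statement recorded just before the proposition, and it ensures that $T_{K,\nu} s_p = T_{K,\nu} s_q$ forces $s_p = s_q$ in $L_2(\nu,\mathbb{R}^d)$. Continuity factors through the same decomposition. The $L_2(\nu,\mathbb{R}^d)$-continuity of $p \mapsto s_p$ under the Sobolev-type norm is the main technical step of Proposition~4.1's proof and may be reused verbatim; the operator $T_{K,\nu}$ is bounded linear from $L_2(\nu,\mathbb{R}^d)$ into $\mathcal{H}_K$ with operator norm controlled by $\sup_{x \in \mathcal{X}}\|K(x,x)\|_{\mathrm{op}}^{1/2}$, which is finite because $K$ is assumed bounded and $\mathcal{X}$ is compact. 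The composition is therefore continuous.

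The main obstacle is the continuity of $p \mapsto s_p = \nabla p / p$ under the Sobolev-type norm, which is delicate because $p$ may approach zero; however, this is exactly the technical content handled inside Proposition~4.1's proof and does not need to be redone here. Beyond that reuse, the only work specific to Proposition~4.2 consists of the two observations about $T_{K,\nu}$---injectivity (from the characteristic property of $K$) and boundedness (from boundedness of $K$)---both of which are standard and require no further calculation.
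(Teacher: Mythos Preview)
Your proposal is correct and matches the paper's own proof essentially step for step: the paper also writes the feature map as the composition $\tilde\phi = T_{K,\nu}\circ(p\mapsto\nabla\log p)$, reuses the compactness/continuity/injectivity argument from Proposition~4.1 for the inner map, and then observes that $T_{K,\nu}$ is bounded (hence continuous) because $K$ is bounded and injective because $K$ is characteristic, concluding via the Christmann--Steinwart criterion. The only cosmetic difference is that you cite \citet{DBLP:conf/icml/MeunierPC22} for the exponentiated-Hilbertian-metric universality criterion whereas the paper invokes \citet{christmann2010universal} directly.
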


\paragraph{A diffusion interpretation of the KGFD}
In this section, we establish a relationship between the KGFD and diffusion
processes~\citep{rogers2000diffusions},
further anchoring the KGFD to the
array of previously known divergences while opening the door for possible
refinements and generalizations.
Diffusion processes are well-known instances of
stochastic processes $ (X_t)_{t \geq  0} $ that evolve from some
initial distribution $ \mu_0 $ towards a target distribution $ p $ according to
the differential update rule
\begin{equation*}
    \mathrm{d}X_t = s_p(X_t) \,\mathrm{d}t + \sqrt 2\mathrm{d}W_t, \quad X_0 \sim \mu_0,
\end{equation*}
where $ W_t $ is a standard Brownian motion.
For any time $ t \geq  0 $, the probability density of $ X_t $ is the solution $\mu_{\mu_0, p}(\cdot, t)$
of the so-called Fokker-Planck equation
\begin{equation}\label{eq:fp}
    \frac{\partial \mu(x, t)}{\partial t} = \operatorname{div}(-\mu(x, t) s_p(x)) + \Delta_x \mu(x, t)
\end{equation}
with initial condition $ \mu(\cdot, 0) = \mu_0 $.
\cref{prop:link-kl-gfd} establishes a link between these solutions and the KGFD:

\begin{proposition}[Diffusion interpretation of the KGFD]\label{prop:link-kl-gfd}
    Let $\mu_{\nu, p} $ (resp.\ $ \mu_{\nu, q} $)
    be the solution of \cref{eq:fp} with initial
    condition $ \nu $ and target $ p $ (resp.\ $q$). Let $ k $ be a \emph{real-valued},
    twice-differentiable kernel.
    Then, we have that
    \begin{equation*}
        \lim_{t \to 0} \frac{1}{t}\operatorname{MMD}(\mu_{\nu, p}(\cdot, t), \mu_{\nu, q}(\cdot, t)) = \sqrt { \operatorname{KGFD}(p, q) } 
    \end{equation*}
    where the $ \operatorname{MMD} $ is w.r.t.\ the kernel $ k $, and the $\operatorname{KGFD}$ is with respect to the matrix-valued kernel
    $ \nabla_{x} \nabla_{ y } k(x, y) $.
\end{proposition}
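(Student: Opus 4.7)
The plan is to combine the mean-embedding characterization of MMD with a first-order Taylor expansion at $t=0$ along the Fokker-Planck flow, and then recognize the resulting squared norm as the $\operatorname{KGFD}$ with respect to the prescribed matrix-valued kernel. Write $m_p(t) \coloneqq \int k(x, \cdot)\, \mu_{\nu,p}(x, t)\,\mathrm{d}x \in \mathcal H_k$ for the kernel mean embedding of $\mu_{\nu,p}(\cdot,t)$ (and similarly $m_q$), so that $\operatorname{MMD}^2(\mu_{\nu,p}(\cdot,t),\mu_{\nu,q}(\cdot,t)) = \|m_p(t) - m_q(t)\|^2_{\mathcal H_k}$. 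Because the Fokker-Planck equation~\cref{eq:fp} starts at $\mu_{\nu,p}(\cdot,0) = \mu_{\nu,q}(\cdot,0) = \nu$, one has $m_p(0) = m_q(0)$, so by Fréchet-differentiability at $0$ a Taylor expansion in $\mathcal H_k$ gives $\|m_p(t)-m_q(t)\|_{\mathcal H_k} = t\,\|\dot m_p(0) - \dot m_q(0)\|_{\mathcal H_k} + o(t)$. Dividing by $t$ reduces the claim to showing $\|\dot m_p(0) - \dot m_q(0)\|_{\mathcal H_k}^2 = \operatorname{KGFD}(p,q)$.

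Next, I would compute this derivative by differentiating under the integral and invoking~\cref{eq:fp}:
$$\dot m_p(0) = \int k(x,\cdot)\,\bigl[-\operatorname{div}(\nu(x)\,s_p(x)) + \Delta \nu(x)\bigr]\,\mathrm{d}x.$$
Integration by parts (boundary terms being absorbed by the assumed decay of $\nu$ or by compactness of $\mathcal X$ with suitable boundary conditions) transfers the derivatives onto $k(x,\cdot)$, giving
$$\dot m_p(0) = \int \nabla_x k(x,\cdot)\cdot s_p(x)\,\nu(\mathrm{d}x) + \int \Delta_x k(x,\cdot)\,\nu(\mathrm{d}x).$$
The Laplacian term does not depend on $p$ and therefore cancels in the difference:
$$\dot m_p(0) - \dot m_q(0) = \int \nabla_x k(x,\cdot)\cdot (s_p - s_q)(x)\,\nu(\mathrm{d}x).$$

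Finally, I would use the differentiated reproducing property $\langle \partial_{x_i} k(x,\cdot), \partial_{y_j} k(y,\cdot)\rangle_{\mathcal H_k} = \partial_{x_i}\partial_{y_j} k(x,y)$, which is valid since $k$ is twice-differentiable, to expand the squared $\mathcal H_k$-norm as a double integral:
$$\|\dot m_p(0) - \dot m_q(0)\|_{\mathcal H_k}^2 = \iint (s_p - s_q)(x)^{\top}\,\nabla_x\nabla_y k(x,y)\,(s_p - s_q)(y)\,\nu(\mathrm{d}x)\,\nu(\mathrm{d}y).$$
This is precisely $\operatorname{KGFD}(p,q)$ associated with the matrix-valued kernel $K(x,y) = \nabla_x \nabla_y k(x,y)$, in the form obtained by unfolding $\|T_{K,\nu}(s_p - s_q)\|^2_{\mathcal H_K}$ via the operator-valued reproducing property. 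Taking square roots completes the proof.

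The main obstacle will be analytic rigor rather than the algebraic identity: one needs to justify (i) that $t \mapsto m_p(t)$ is Fréchet-differentiable at $0$ in $\mathcal H_k$ (not merely pointwise), so the remainder is genuinely $o(t)$ in the RKHS norm, and (ii) that the boundary terms in the integration by parts vanish. Both points require regularity/decay assumptions on $\nu$, $p$, $q$, $k$, and short-time smoothness of the Fokker-Planck solution $\mu_{\nu,p}(\cdot,t)$. Under the twice-differentiability of $k$ posited in the statement and standard tail/decay hypotheses (as are routine in Stein-type arguments), these conditions can be enforced to make the above computation go through.
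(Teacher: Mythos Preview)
Your proposal is correct and follows essentially the same route as the paper: a first-order Taylor expansion at $t=0$ (both flows start at $\nu$, so the zeroth-order terms cancel), the Fokker--Planck equation to identify the time derivative, an integration by parts to move derivatives onto $k$, cancellation of the $p$-independent piece, and recognition of the resulting double integral as $\operatorname{KGFD}$ for $K=\nabla_x\nabla_y k$. The only cosmetic difference is that the paper carries out the expansion on the signed density $m_t=\mu_{\nu,p}(\cdot,t)-\mu_{\nu,q}(\cdot,t)$ via a Gateaux-derivative computation of $\operatorname{MMD}^2$, whereas you work directly with the RKHS-valued mean embeddings $m_p(t),m_q(t)\in\mathcal H_k$; the two viewpoints are equivalent, and your explicit flagging of the Fr\'echet-differentiability and boundary-term hypotheses is, if anything, more careful than the paper's own treatment.
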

\begin{proof}
    See Section~D of the Appendix.
\end{proof}
\cref{prop:link-kl-gfd} frames the exponentiated KGFD kernel as the $ t  \to 0
$ limit of the kernel obtained by setting
\begin{equation*}
\phi_t \colon p  \longmapsto \nabla_x \log \mu_{\nu, p}(\cdot, t)
\end{equation*}
which is the score of the solution of the Fokker-Planck equation~\cref{eq:fp}
with target $ p $ and initial measure $ \nu $, and setting $ H = \mathcal  H $.
Interestingly, the other limit case $ t  \to \infty $ recovers 
the exponentiated MMD kernel.
Indeed, under mild conditions, the Fokker-Planck solution converges to the
target and thus we have that $ \lim_{  t \to \infty }\phi_t(p) = p  $: the
feature map converges to the identity. Thus, the diffusion framework introduced
above allows to recover both the $ \operatorname{KGFD} $ and the $ \operatorname{MMD} $ as
special cases. However, while the limit $ t  \to 0 $ and $ t  \to \infty $
both yield Hilbertian metrics, it is an open question whether for a given time
$0 < t < \infty$,  $ \phi_t $ is also Hilbertian. A positive answer to this question would
allow to construct positive definite kernels that can possibly overcome the
pitfalls of score-based tools~\citep{wenliang2020blindness, zhang2022towards},
while being computable in finite time.

\section{Fast and scalable calibration tests}\label{sec:fast-calibration-tests}

The framing of the calibration testing problem of \cref{sec:calibration-as-cgof} alongside with the GFD-based kernels of \cref{subsec:expgfd-kernel} allows us to design a fast and scalable alternative to the pioneering tests of \citet{widmann2019calibration}.
The full testing procedure is outlined in \cref{alg:ccgof}.

\begin{algorithm2e}
    \DontPrintSemicolon
    \caption{CGOF Calibration Test (GFD Kernel)}\label{alg:ccgof}
    \KwData{Pairs ${\{(P_{|x^i}, y^i)\}}_{i=1}^n \stackrel{\text{i.i.d.}}{\sim} \mathbb{P}(P_{|X}, Y)$}
    \KwResult{Whether to reject $H_0 \colon \text{``model is calibrated''}$}
	\Parameters{Base measure $\nu$, num.\ of base samples $m$, number of data samples $n$, kernel $l \colon \mathcal{Y}^2 \to \mathbb{R}$, significance level $\alpha$}
    \BlankLine
    \For{$i \leftarrow 1$ \KwTo $m$}{%
        Draw $z^i \sim \nu$\;
    }
    \For{$ 1 \leq i < j \leq n$}{%
        \tcc{Use \cref{alg:expgfdkernel} with base samples $\{z^k\}_{k=1}^m$}
        $\kappa^{i,j} \leftarrow \widehat{K_{\nu}(P_{|x^i}, P_{|x^j})}$\;
    }
    Run \cref{alg:ccgof-tractable} with kernel $k(P_{|x^i}, P_{|x^j}) \coloneqq \kappa^{i,j}$\;
\end{algorithm2e}

{\bfseries Calibration tests as a reliability tests in Bayesian inference}
As one main motivation for studying calibration of generic probabilistic models is Bayesian inference, it is important to note that reliability metrics traditionally
used in Bayesian inference such as conservativeness~\citep{Hermans2021}
differ from the notion of calibration in \cref{eq:calibration}.
We first briefly recall the notion of posterior coverage:
\begin{definition}[Conservativeness of a Bayesian model~\citep{Hermans2021}]
Let $P_{|x}(\cdot)$ be a conditional distribution model for $\mathbb{P}(Y \in \cdot \mid X = x)$, and assume that $P_{|x}$ has a density $f_{P_{|x}}$ for $\mathbb{P}(X)$-almost every $x$.
For level $1 - \alpha \in [0, 1]$, let $\Theta_{P_{|x}}(1 - \alpha)$ be the highest density region of $P_{|x}$.%
\footnote{The highest density region of a probabilistic model $P_{|x}$ with density $f_{P_{|x}}$ is defined~\citep[see, e.g.,][]{Hyndman1996} by
$\Theta_{P_{|x}}(1 - \alpha)\coloneqq \{ y \colon f_{P_{|x}}(y) \geq c_{P_{|x}}(1 - \alpha) \}$ where
$c_{P_{|x}}(1 - \alpha) \coloneqq \sup \{ c \colon \int \mathbbm{1}_{[c, \infty)}(f_{P_{|x}}(y)) \, P_{|x}(\mathrm{d}y) \geq 1 - \alpha \}$.}
Then $P_{|\cdot}$ is said to be \emph{conservative} if
\begin{equation*}
     \E_{(x, y) \sim \mathbb{P}(X, Y)} \mathbbm{1}_{\Theta_{P_{|x}}(1 - \alpha)}(y) \geq 1 - \alpha.
\end{equation*}
\end{definition}
In the following proposition, we show that a probabilistic model that is calibrated according to \cref{eq:calibration} is also conservative in the sense of \citet{Hermans2021}, grounding the use of our tests in Bayesian inference. 
\begin{proposition}[Calibrated models are conservative]
If a model $P_{|\cdot}$ is calibrated in the sense of \cref{eq:calibration}, then it is conservative.
\end{proposition}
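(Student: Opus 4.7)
The plan is to reduce conservativeness to a pointwise statement about the highest density region via the tower property, then exploit the calibration identity to replace the true conditional distribution of $Y$ given $P_{|X}$ with $P_{|X}$ itself.

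First, I would verify the key pointwise property of the HDR: that
\[
P_{|x}\bigl(\Theta_{P_{|x}}(1-\alpha)\bigr) \geq 1-\alpha
\]
for $\mathbb{P}(X)$-almost every $x$. This does not follow immediately from the definition of $c_{P_{|x}}(1-\alpha)$, because that constant is defined as a supremum, so in principle the defining inequality might fail at $c_{P_{|x}}(1-\alpha)$ itself. I would handle this with continuity from above: pick $c_n \uparrow c_{P_{|x}}(1-\alpha)$ with $c_n < c_{P_{|x}}(1-\alpha)$, each satisfying the inequality inside the sup. The superlevel sets $A_n \coloneqq \{y : f_{P_{|x}}(y) \geq c_n\}$ are decreasing and $\bigcap_n A_n = \Theta_{P_{|x}}(1-\alpha)$, so by continuity of the probability measure $P_{|x}$, $P_{|x}(\Theta_{P_{|x}}(1-\alpha)) = \lim_n P_{|x}(A_n) \geq 1-\alpha$.

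Next, I would apply the tower property, conditioning on $P_{|X}$:
\[
\mathbb{E}_{(x,y)\sim \mathbb{P}(X,Y)}\bigl[\mathbbm{1}_{\Theta_{P_{|x}}(1-\alpha)}(y)\bigr] = \mathbb{E}\bigl[\mathbb{P}\bigl(Y \in \Theta_{P_{|X}}(1-\alpha) \,\big|\, P_{|X}\bigr)\bigr].
\]
Since $\Theta_{P_{|X}}(1-\alpha)$ is $\sigma(P_{|X})$-measurable (it depends on $Y$ only through the density $f_{P_{|X}}$ and the derived constant $c_{P_{|X}}(1-\alpha)$), the calibration identity $\mathbb{P}(Y \in \cdot \mid P_{|X}) = P_{|X}(\cdot)$ $\mathbb{P}(X)$-a.s.\ can be applied with a realization-dependent event: conditional on $P_{|X}=p$, the random event $\{Y \in \Theta_{P_{|X}}(1-\alpha)\}$ becomes the deterministic event $\{Y \in \Theta_{p}(1-\alpha)\}$, whose conditional probability equals $p(\Theta_p(1-\alpha))$. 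Thus the inner conditional probability equals $P_{|X}(\Theta_{P_{|X}}(1-\alpha))$ almost surely, and combining with the first step gives
\[
\mathbb{E}\bigl[\mathbbm{1}_{\Theta_{P_{|X}}(1-\alpha)}(Y)\bigr] = \mathbb{E}\bigl[P_{|X}(\Theta_{P_{|X}}(1-\alpha))\bigr] \geq 1-\alpha.
\]

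The main technical obstacle is the measurability bookkeeping: one must justify that the random set $\Theta_{P_{|X}}(1-\alpha)$ (and the map $p \mapsto p(\Theta_p(1-\alpha))$) is measurable so that conditioning and the substitution of a random event into the calibration identity are legitimate. Under the implicit regularity that makes the definition of HDR meaningful (e.g., $f_{P_{|x}}$ measurable in $(x,y)$ jointly, and $c_{P_{|x}}(1-\alpha)$ measurable in $x$), this is routine but should be stated for completeness. Everything else is a short application of the tower property and the calibration definition.
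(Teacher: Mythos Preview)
Your proposal is correct and follows essentially the same approach as the paper: condition on $P_{|X}$ via the tower property, use the calibration identity to replace the conditional law of $Y$ by $P_{|X}$, and then invoke the HDR coverage inequality $P_{|x}(\Theta_{P_{|x}}(1-\alpha)) \geq 1-\alpha$. You are in fact more careful than the paper, which simply asserts the HDR inequality ``by definition'' and does not address measurability, whereas you justify both points.
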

The proof is given in Section~C of the appendix.

\section{Experiments}\label{sec:experiments}

\begin{figure*}[htb]
    \centering
    \includegraphics[width=\linewidth]{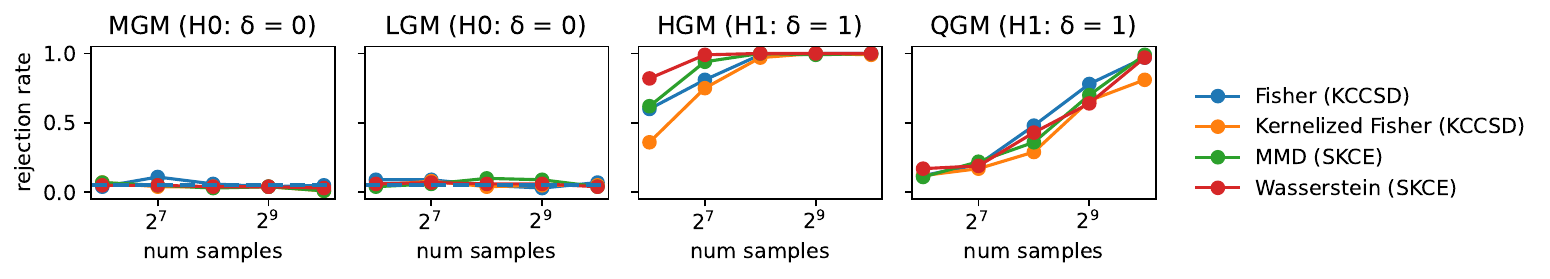}
    \caption{Rejection rates of the KCCSD and SKCE tests with a Gaussian kernel on the target space $\mathcal{Y}$ (significance level $\alpha = 0.05$). All kernels and test statistics are evaluated exactly using closed-form expressions.}
    \label{fig:comparison_kccsd_skce_rejection}
\end{figure*}

\begin{figure*}[htb]
    \centering
    \includegraphics[width=\linewidth]{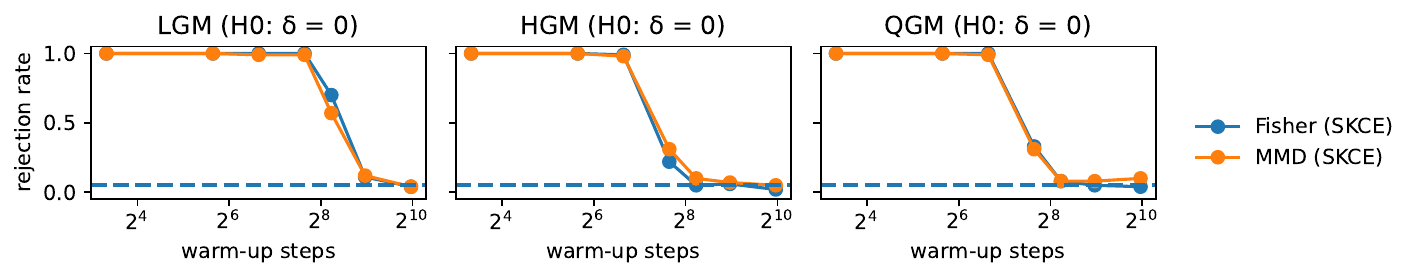}
    \caption{False rejection rates of the SKCE tests for the calibrated LGM, HMC, and QGM ($n = 200$ data points, significance level $\alpha = 0.05$). The expectations in the test statistic are estimated with 2 samples obtained with the Metropolis-adjusted Langevin algorithm (MALA) without step size tuning.}
    \label{fig:skce_mcmc_miscalibrated}
\end{figure*}

We validate the properties of our proposed calibration tests with synthetic data and compare them with existing tests based on the SKCE.%
\footnote{The code to reproduce the experiments is available at \url{https://github.com/pierreglaser/kccsd}.}
More concretely, we run KCCSD tests using either a exponentiated GFD kernel or kernelized exponentiated GFD kernel with a matrix-valued kernel of the form in \cref{eq:kernel_identity} with real-valued Gaussian kernel $k$; and compare them with SKCE tests using two already investigated kernels on distributions: the exponentiated MMD kernel with a Gaussian kernel on the ground space, and, for isotropic Gaussian distributions, the exponentiated Wasserstein kernel with closed-form expression
\begin{multline*}
    k_W\big(\mathcal{N}(\mu, \sigma^2 I_d), \mathcal{N}(\mu', {\sigma'}^2 I_d) \big) \\
    = \exp{\big(- (\|\mu - \mu'\|_2^2 + d (\sigma^2 - {\sigma'}^2)) / (2 \ell^2)\big)}.
\end{multline*}
We set the base measure $\nu$ of the GFD and kernelized GFD kernels to be a standard Gaussian.  On $\mathcal{Y}$, we study the Gaussian and the inverse multi-quadric (IMQ) kernel.

We repeated all experiments with 100 resampled datasets and used a wild bootstrap with 500 samples for approximating the quantiles of the test statistic with a prescribed significance level of $\alpha = 0.05$.
The bandwidths of the kernels are selected with the median heuristic.
A "second-order" median heuristic is used for the ground-space kernels of the KGFD and the exponentiated MMD kernel:
For each pair of distributions, we compute the median distance between samples from an equally weighted mixture of these distributions (numerically for tractable cases such as Gaussian distributions and using samples otherwise), and then the bandwidth of the kernel is set to the median of these evaluations.

We repeatedly generate datasets ${\{(P_{|x^i}, y^i)\}}_{i}$ in a two-step procedure:
First we sample distributions $P_{|x^i}$ and then we draw a corresponding target $y^i$ for each $P_{|x^i}$.
We compare different setups of targets $Y$ and Gaussian distributions $P_{|X}$
with varying degree $\delta \geq 0$ of miscalibration
(models are calibrated for $\delta = 0$ and miscalibrated otherwise):%
\footnote{MGM is adapted from a model used by \citet{widmann2022calibration}, and LGM, HGM, and QGM were used by \citet{jitkrittum2020testing}.}

\paragraph{Mean Gaussian Model (MGM)}
Here $\mathcal{X} = \mathcal{Y} = \mathbb{R}^5$, $\mathbb{P}(X) = \mathcal{N}(0, I_5)$, $\mathbb{P}(Y \,|\, X = x) = \mathcal{N}(x, I_5)$, and $P_{|x} = \mathcal{N}(x + \delta c, I_5)$ for $c \in \{\mathbf{1}_5, e_1\} \subset \mathbb{R}^5$ (miscalibration of all dimensions or only the first one).

\paragraph{Linear Gaussian Model (LGM)}
Here $\mathcal{X} = \mathbb{R}^5$, $\mathcal{Y} = \mathbb{R}$, $\mathbb{P}(X) = \mathcal{N}(0, I_5)$, and $P_{|x} = \mathcal{N}(\delta + \sum_{i=1}^5 ix_i, 1)$.

\paragraph{Heteroscedastic Gaussian Model (HGM)}
Here $\mathcal{X} = \mathbb{R}^3$, $\mathcal{Y} = \mathbb{R}$, $\mathbb{P}(X) = \mathcal{N}(0, I_3)$, $\mathbb{P}(Y \,|\, X = x) = \mathcal{N}(m(x), 1)$, and $P_{|x} = \mathcal{N}(m(x), \sigma^2(x))$ with $m(x) = \sum_{i=1}^3 x_i$ and $\sigma^2(x) = 1 + 10 \delta \exp{(- \|x - c\|^2_2 / (2 \cdot 0.8^2))}$ for $c = 2/3 \,\mathbf{1}_3$.

\paragraph{Quadratic Gaussian Model (QGM)}
Here $\mathcal{X} = \mathcal{Y} = \mathbb{R}$, $\mathbb{P}(X) = \mathcal{U}(-2, 2)$, $\mathbb{P}(Y \,|\, X = x) = \mathcal{N}(0.1 x^2 + x + 1, 1)$, and $P_{|x} = \mathcal{N}(0.1 (1 - \delta) x^2 + x + 1, 1)$.

\Cref{fig:comparison_kccsd_skce_rejection} demonstrates that the proposed KCCSD tests are calibrated: The false rejection rates (type I errors) of the calibrated MGM and LGM do not exceed the set significance level, apart from sampling noise.
Figures~F.1 and F.7 in the suppplementary material confirm empirically that this is the case also when we approximate the Fisher and MMD kernels using samples.

Moreover, we see in \cref{fig:comparison_kccsd_skce_rejection} that for the miscalibrated HGM the SKCE tests exhibit larger rejection rates, and hence test power, than the KCCSD tests in the small sample regime, regardless of the kernel choice.
This specific setting with Gaussian distributions and a Gaussian kernel on the target space $\mathcal{Y}$ is favourable to the SKCE test as both its test statistic, as well as the exponentiated MMD or Wasserstein kernel evaluations are available in closed-form.
In such analytical scenarios we expect the score-based KCCSD tests to perform worse~\citep{wenliang2020blindness,zhang2022towards}.
However, the KCCSD tests present themselves as a practically useful alternative even in this example:
For the miscalibrated HGM their rejection rates are close to 100\% with $\geq 256$ data points,
and for the miscalibrated QGM they show very similar performance as the SKCE tests.
Overall, as expected, we see in \cref{fig:comparison_kccsd_skce_rejection} that for all studied tests rejection rates for the miscalibrated models increases with increasing number of samples.

One main advantage of the KCCSD over the SKCE is that it has first-class support for unnormalized models for which only the score function is available:
In contrast to the SKCE its test statistic only involves scores but no expectations.
In principle, for unnormalized models these expectations in the test statistic of the SKCE can be approximated with, e.g., MCMC sampling.
However, \cref{fig:skce_mcmc_miscalibrated} shows that there is a major caveat:
If the MCMC method is not tuned sufficiently well (e.g., if the chain is too short or the proposal step size is not tuned properly),
it might return biased samples which causes the SKCE tests to be miscalibrated.
On the other hand, increasing the number of MCMC samples increases the computational advantage of the KCCSD even more.

Another difference between the KCCSD and SKCE is highlighted in Figures~F.1 and F.2:
The number of combinations of kernels for which the test statistic can be evaluated exactly is smaller for the SKCE (in these Gaussian examples, it requires Gaussian kernels on the target space).

One limitation of the (kernelized) exponentiated GFD Kernel is that it necessitates setting an additional hyperparameter: the base measure $\nu$, which weights the score differences between its two input distributions $p$ and $q$ at all points of the ground space $\mathcal X$. While our experiments have set $\nu$ to be a Gaussian measure in order to obtain closed-form expressions for Gaussian $p, q$, other choices may be more adequate depending on the problem at hand. For instance, when $p$ and $q$ are posterior models for a given prior $\pi$, we hypothesize that setting $\nu$ to $\pi$ constitutes a better default choice.

\section{Conclusion} \label{conclusion}
In this paper, we introduced the Kernel Calibration Conditional Stein Discrepancy test, a fast and reliable alternative to prior calibration tests for general, density-based probabilistic models, thereby addressing an important need in the Bayesian inference community. In doing so, we introduced kernels for density-based inputs, which we believe are of independent interest and could be used in other domains such as distribution regression~\citep{DBLP:journals/jmlr/SzaboSPG16} or meta-learning~\citep{denevi2020advantage}.
Moreover, while the set of experiments conducted in this paper focused on ``offline'' calibration testing, its low computational cost
opens the door to promising new use cases. One particularly interesting avenue would consist in using the KCCSD test criterion
as a regularizer directly within the training procedure of a probabilistic model, allowing not only to detect miscalibration but also to prevent it in the first place. We look forward to seeing extensions and applications of the tools introduced in this paper.

%

\subsubsection*{Acknowledgements}
    This research was financially supported by the Centre for Interdisciplinary Mathematics (CIM) at Uppsala University, Sweden, by the projects \emph{NewLEADS - New Directions in Learning Dynamical Systems} (contract number: 621-2016-06079) and \emph{Handling Uncertainty in Machine Learning Systems} (contract number: 2020-04122), funded by the Swedish Research Council, by the \emph{Kjell och M{\"a}rta Beijer Foundation}, by the Wallenberg AI, Autonomous Systems and Software Program (WASP) funded by the Knut and Alice Wallenberg Foundation, and by the Excellence Center at Linköping-Lund in Information Technology (ELLIIT).
    Pierre Glaser and Arthur Gretton acknowledge support from the Gatsby Charitable Foundation.

\bibliography{biblio}

\onecolumn 

\appendix

\numberwithin{equation}{section}
\counterwithin{figure}{section}
\part*{Supplementary Material}
\section{Conditional Goodness-of-Fit: General Operator-Valued Kernel}\label{app-sec:cgof-general-kernel}

Assume that
\begin{itemize}
\item kernel $l \in \mathcal{C}^2(\mathcal{Y} \times \mathcal{Y}, \mathbb{R})$,
\item densities $P_{|x} \in C^1(\mathcal{Y}, \mathbb{R})$ for $\mathbb{P}(X)$-almost all $x$, and that
\item $\E_{(x,y) \sim \mathbb{P}(X, Y)} \left\|K_{P_{|x}} \xi_{P_{|x}}(y, \cdot) \right\|_{\mathcal{F}_K} < \infty$.
\end{itemize}
Due to the Bochner integrability of $(x, y) \mapsto K_{P_{|x}} \xi_{P_{|x}}(y, \cdot)$ expectation and inner product commute~\citep[see][Definition~A.5.20]{Steinwart2008SVM}, and hence we have
\begin{equation*}
\begin{split}
    C_{P_{|\cdot}}(\mathbb{P}) &= \left \| \E_{(x, y) \sim \mathbb{P}(X, Y)}\left [ K_{P_{|x}} \xi_{P_{|x}}(y, \cdot) \right ]  \right \|^{2}_{\mathcal  F_{K}} \\
    &= \bigg\langle \E_{(x, y) \sim \mathbb{P}(X, Y)}\left [ K_{P_{|x}} \xi_{P_{|x}}(y, \cdot) \right ], \E_{(x', y') \sim \mathbb{P}(X, Y)}\left [ K_{P_{|x'}} \xi_{P_{|x'}}(y', \cdot) \right ] \bigg\rangle_{\mathcal  F_{K}} \\
    &= \E_{(x, y) \sim \mathbb{P}(X, Y)} \E_{(x', y') \sim \mathbb{P}(X, Y)} \bigg\langle K_{P_{|x}} \xi_{P_{|x}}(y, \cdot), K_{P_{|x'}} \xi_{P_{|x'}}(y', \cdot) \bigg\rangle_{\mathcal  F_{K}} \\
    &= \E_{(x, y) \sim \mathbb{P}(X, Y)} \E_{(x', y') \sim \mathbb{P}(X, Y)} \bigg\langle K^*_{P_{|x'}} K_{P_{|x}} \xi_{P_{|x}}(y, \cdot), \xi_{P_{|x'}}(y', \cdot) \bigg\rangle_{\mathcal  F^{d_y}_l},
\end{split}
\end{equation*}
where $K^*_{P_{|x'}}$ is the adjoint of $K_{P_{|x'}}$.
The reproducing property implies $K^*_{P_{|x'}} K_{P_{|x}} = K(P_{|x}, P_{|x'})$, and therefore we get
\begin{equation*}
\begin{split}
    C_{P_{|\cdot}}(\mathbb{P}) &= \E_{(x, y) \sim \mathbb{P}(X, Y)} \E_{(x', y') \sim \mathbb{P}(X, Y)} \bigg\langle K(P_{|x},P_{|x'}) \xi_{P_{|x}}(y, \cdot), \xi_{P_{|x'}}(y', \cdot) \bigg\rangle_{\mathcal  F^{d_y}_l} \\
    &= \E_{(x, y) \sim \mathbb{P}(X, Y)} \E_{(x', y') \sim \mathbb{P}(X, Y)} H((P_{|x}, y), (P_{|x'}, y'))
\end{split}
\end{equation*}
where
\begin{equation*}
\begin{split}
    H((p, y), (p', y')) &\coloneqq \bigg\langle K(p, p') \xi_{p}(y, \cdot), \xi_{p'}(y', \cdot) \bigg\rangle_{\mathcal  F^{d_y}_l} \\
    &= \bigg\langle K(p, p') \xi_{p}(y, \cdot), l(y', \cdot) \nabla_{y'} \log f_{p'}(y') + \nabla_{y'} l(y', \cdot) \bigg\rangle_{\mathcal  F^{d_y}_l}.
\end{split}
\end{equation*}
For $i \in \{1,\ldots,d_y\}$, let $\operatorname{proj}_i \colon \mathcal{F}_l^{d_y} \to \mathcal{F}_l$ be the projection map to the $i$th subspace of the product space $\mathcal{F}_l^{d_y}$, and similarly let $\iota_i \colon \mathcal{F}_l \to \mathcal{F}_l^{d_y}$ be the embedding of $\mathcal{F}_l$ in the $i$th subspace of $\mathcal{F}_l^{d_y}$ via $x \mapsto (0, \ldots, 0, x, 0, \ldots, 0)$.
Then we can write
\begin{equation*}
\begin{split}
    H((p, y), (p', y')) &= \sum_{i=1}^{d_y} \bigg\langle \operatorname{proj}_i K(p, p') \xi_{p}(y, \cdot), l(y', \cdot) \frac{\partial}{\partial y'_i} \log f_{p'}(y') + \frac{\partial}{\partial y'_i} l(y', \cdot) \bigg\rangle_{\mathcal  F_l} \\
    &= \sum_{i=1}^{d_y} \left[(\operatorname{proj}_i K(p, p') \xi_{p}(y, \cdot))(y') \frac{\partial}{\partial y'_i} \log f_{p'}(y')
    + \frac{\partial}{\partial y'_i} (\operatorname{proj}_i K(p, p') \xi_{p}(y, \cdot))(y')\right].
\end{split}
\end{equation*}
Since $K(p, p') \in \mathcal{L}(\mathcal{F}_l^{d_y})$ is a linear operator, we have
\begin{equation*}
    K(p, p') \xi_p(y, \cdot) = K(p, p') (l(y, \cdot) \nabla_y \log f_p(y)) + K(p, p') \nabla_y l(y, \cdot).
\end{equation*}
For $1 \leq i, j \leq d_y$, define $K_{i,j}(p, p') \colon \mathcal{F}_l \to \mathcal{F}_l$ as the continuous linear operator
\begin{equation*}
    K_{i,j}(p, p') := \operatorname{proj}_i K(p, p') \iota_j.
\end{equation*}
Thus we have
\begin{equation*}
    \operatorname{proj}_i K(p, p') \xi_p(y, \cdot) = \sum_{j=1}^{d_y} \left[\frac{\partial}{\partial y_j} \log f_p(y)\right] K_{i,j}(p, p') l(y, \cdot) + \sum_{j=1}^{d_y} \frac{\partial}{\partial y_j} K_{i,j}(p, p') l(y, \cdot),
\end{equation*}
and therefore
\begin{equation*}
    (\operatorname{proj}_i K(p, p') \xi_p(y, \cdot))(y') =\sum_{j=1}^{d_y}  \left[\frac{\partial}{\partial y_j} \log p(y)\right]  (K_{i,j}(p, p') l(y, \cdot))(y') + \sum_{j=1}^{d_y} \frac{\partial}{\partial y_j} (K_{i,j}(p, p') l(y, \cdot))(y').
\end{equation*}
Due to the differentiability of kernel $l$ we can interchange inner product and differentiation~\citep[][Lemma~4.34]{Steinwart2008SVM}, and thus we obtain
\begin{equation*}
\begin{split}
    H((p, y), (p', y')) ={}& \sum_{i,j=1}^{d_y} \left[\frac{\partial}{\partial y_j} \log f_p(y) \right]\left[\frac{\partial}{\partial y'_i} \log f_{p'}(y') \right]
    (K_{i,j}(p, p') l(y, \cdot))(y') \\
    &+ \sum_{i,j=1}^{d_y}\left[ \frac{\partial}{\partial y'_i} \log f_{p'}(y')\right] \frac{\partial}{\partial y_j} (K_{i,j}(p, p') l(y, \cdot))(y') \\
    &+ \sum_{i,j=1}^{d_y} \left[\frac{\partial}{\partial y_j} \log f_p(y) \right] \frac{\partial}{\partial y'_i} (K_{i,j}(p, p') l(y, \cdot))(y') \\
    &+ \sum_{i,j=1}^{d_y} \frac{\partial}{\partial y'_i}  \frac{\partial}{\partial y_j} (K_{i,j}(p, p') l(y, \cdot))(y'),
\end{split}
\end{equation*}
Define $A \colon (P_{|\mathcal{X}} \times \mathcal{Y})^2  \to \mathbb{R}^{d_y \times d_y}$ by
\begin{equation*}
    [A((p, y), (p', y'))]_{i,j} := (K_{i,j}(p, p') l(y, \cdot))(y') \qquad (1 \leq i, j \leq d_y).
\end{equation*}
Thus we obtain
\begin{equation}\label{eq:calibration-gof-test-statistics-advanced}
    H((p, y), (p', y')) = (s_{p'}(y') + \nabla_{y'})^{\top} A((p, y), (p', y')) (s_{p}(y) + \nabla_{y}),
\end{equation}
where for $x, x' \in \mathbb{R}^d, M(x, x') \in \mathbb{R}^{d \times d}$ we use the notation
\begin{equation*}
\nabla_x^\top M(x, x') = \begin{bmatrix}
\nabla_x^\top [M(x, x')]_{:,1} & \cdots & \nabla_x^\top [M(x, x')]_{:,d}
\end{bmatrix}
=\begin{bmatrix}
\operatorname{div}_x [M(x, x')]_{:,1} & \cdots & \operatorname{div}_x [M(x, x')]_{:,d}
\end{bmatrix},
\end{equation*}
and similarly
\begin{equation*}
M(x, x') \nabla_{x'} = {\left(\nabla_{x'}^\top M(x, x')^\top\right)}^\top =
\begin{bmatrix}
\operatorname{div}_{x'} [M(x, x')]_{1,:} &
\cdots &
\operatorname{div}_{x'} [M(x, x')]_{d,:}
\end{bmatrix}^\top
\end{equation*}
and
\begin{equation*}
\nabla_x^\top M(x, x') \nabla_{x'} =
\nabla_x^\top (M(x, x') \nabla_{x'}^\top) = 
\sum_{i,j=1}^d \frac{\partial^2}{\partial x_i \partial x'_j} {[M(x, x')]}_{i,j}.
\end{equation*}

Thus, given samples $\{(P_{|x^i}, y^i)\}_{i=1}^n \stackrel{\text{i.i.d.}}{\sim} \mathbb{P}(P_{|X}, Y)$, an unbiased estimator of statistic $C_{P_{|\cdot}}(\mathbb{P})$ is
\begin{equation*}
    \widehat{C_{P_{|\cdot}}} = \frac{2}{n(n-1)} \sum_{1 \leq i < j \leq n} H((P_{|x^i}, y^i), (P_{|x^j}, y^j)),
\end{equation*}
where $H$ is given by \cref{eq:calibration-gof-test-statistics-advanced}.

If kernel $K$ is of the form in \cref{eq:kernel_identity}, we recover the simpler formula in \cref{eq:calibration-cgof-test-statistic}.
In this case $A((p, y), (p', y')) = k(p, p') l(y, y') I_{d_y} \in \mathbb{R}^{d_y \times d_y}$, i.e., $A$ is a scaled identity matrix.

\section{KCCSD as a special case of SKCE}\label{app-sec:kccsd-relation-skce}

We prove the following general lemma that establishes the KCSD as a special case of the MMD.
Then \cref{prop:kccsd-relation-skce} follows immediately by considering random variables $Z = P_{|X}$ and $Y$, and models $Q_{|z} = z = P_{|x}$.

\begin{lemma}[KCSD as a special case of the MMD]\label{lemma:kcsd-relation-mmd}
Let $Q_{|z}$ be models of the conditional distributions $\mathbb{P}(Y \in \cdot \,|\, Z = z)$.
Moreover, we assume that
\begin{itemize}
    \item $Q_{|z}$ has a density $f_{Q_{|z}} \in C^1(\mathcal{Y}, \mathbb{R})$ for $\mathbb{P}(Z)$-almost all $z$,
    \item kernel $l \in C^2(\mathcal{Y} \times \mathcal{Y}, \mathbb{R})$,
    \item $\E_{(z,y) \sim \mathbb{P}(Z, Y)} \left\|K_{z} \xi_{Q_{|z}}(y, \cdot) \right\|_{\mathcal{F}_K} < \infty$, and
    \item $\oint_{\partial \mathcal{Y}} l(y, y') f_{Q_{|z}}(y) n(y) \, \mathrm{d}S(y') = 0$ and $\oint_{\partial \mathcal{Y}} \nabla_{y} l(y, y') f_{Q_{|z}}(y') n(y') \, \mathrm{d}S(y') = 0$ for $\mathbb{P}(Z)$-almost all $z$,
\end{itemize}
where $n(y)$ is the unit vector normal to the boundary $\partial \mathcal{Y}$ of $\mathcal{Y}$ at $y \in \mathcal{Y}$.%
\footnote{These assumptions are not restrictive in practice since they are satisfied
if the conditions of \cite[Theorem~1]{jitkrittum2020testing} hold
which are required to ensure that $D_{Q_{|\cdot}}(\mathbb{P}) = 0$ if and only if $Q_{|Z}(\cdot) = \mathbb{P}(Y \in \cdot | Z)$ $\mathbb{P}(Z)$-almost surely.}

Then
\begin{equation*}
    D_{Q_{|\cdot}}(\mathbb{P}) = \operatorname{MMD}_{k_{Q_{|\cdot}}}^2(\mathbb{P}(Z, Y), \mathbb{P}_{Q_{|\cdot}}(Z, Y) )
\end{equation*}
where we define distribution $\mathbb{P}_{Q_{|\cdot}}$ by
\begin{equation*}
    \mathbb{P}_{Q_{|\cdot}}(Z \in A, Y \in B) := \int_A Q_{|z}(Y \in B) \, \mathbb{P}(Z \in \mathrm{d}z)
\end{equation*}
and kernel $k_{Q_{|\cdot}} \colon (\mathcal{Z} \times \mathcal{Y}) \times (\mathcal{Z} \times \mathcal{Y}) \to \mathbb{R}$ as
\begin{equation*}
    k_{Q_{|\cdot}}((z, y), (z', y')) := (s_{Q_{|z'}}(y') + \nabla_{y'})^\mathsf{T} A((z, y), (z', y')) (s_{Q_{|z}}(y) + \nabla_y),
\end{equation*}
using the same notation as in \cref{app-sec:cgof-general-kernel} and similarly defining $A((z, y), (z', y')) \in \mathbb{R}^{d_y \times d_y}$ by
\begin{equation*}
    \left[A((z, y), (z', y'))\right]_{i,j} := (K_{i,j}(z, z') l(y, \cdot))(y') \qquad (1 \leq i, j \leq d_y).
\end{equation*}
If $K$ is of the form $k(\cdot, \cdot) I_{\mathcal{F}_l^{d_y}}$, function $A$ simplifies to
\begin{equation*}
    A((z, y), (z', y')) = k(z, z') l(y, y') I_{d_y}
\end{equation*}
and kernel $k_{Q_{|\cdot}}$ is given by
\begin{multline*}
k_{Q_{|\cdot}}((z, y), (z', y')) \\
= k(z, z') \left[ l(y, y') s_{Q_{|z}}(y)^\mathsf{T} s_{Q_{|z'}}(y') + s_{Q_{|z}}(y)^\mathsf{T}\nabla_{y'} l(y, y') + s_{Q_{|z'}}(y')^\mathsf{T} \nabla_y l(y, y') + \sum_{i=1}^{d_y} \frac{\partial^2}{\partial y_i \partial y'_i} l(y, y')\right].
\end{multline*}
\end{lemma}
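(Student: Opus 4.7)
The plan is to show that the Stein operator annihilates the model conditional $Q_{|z}$, so that two of the three terms in the standard MMD expansion vanish, leaving exactly the double-expectation representation of $D_{Q_{|\cdot}}(\mathbb{P})$. The centerpiece is a kernelized Stein identity for $\xi_{Q_{|z}}$ on $\mathcal{F}_l^{d_y}$, and the main technical obstacle is lifting the two pointwise boundary identities stated in the hypotheses to this identity against every element of $\mathcal{F}_l$.

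\textbf{Step 1 --- Reduce both sides to double expectations.} Reusing the computation of Appendix~A verbatim, Bochner integrability together with $K_z^\ast K_{z'} = K(z, z')$ yields
\begin{equation*}
D_{Q_{|\cdot}}(\mathbb{P}) = \mathbb{E}_{(z,y), (z', y') \sim \mathbb{P}(Z,Y)^{\otimes 2}}\bigl[\bigl\langle K(z, z') \xi_{Q_{|z}}(y, \cdot), \xi_{Q_{|z'}}(y', \cdot)\bigr\rangle_{\mathcal{F}_l^{d_y}}\bigr],
\end{equation*}
and writing this inner product coordinatewise through $A$ (commuting $\langle \cdot, \cdot\rangle$ with partial derivatives, justified by $l \in C^2$) produces the explicit formula claimed for $k_{Q_{|\cdot}}$. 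In parallel, expand the squared MMD as $E_{PP} - 2 E_{PQ} + E_{QQ}$; the term $E_{PP}$ already coincides with the right-hand side of the display.

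\textbf{Step 2 --- Kernelized Stein identity.} The heart of the proof is to establish
\begin{equation*}
\mathbb{E}_{y \sim Q_{|z}}\bigl[\xi_{Q_{|z}}(y, \cdot)\bigr] = 0 \quad \text{in } \mathcal{F}_l^{d_y}, \qquad \text{for } \mathbb{P}(Z)\text{-a.e.\ } z.
\end{equation*}
Testing coordinate $i$ against $g \in \mathcal{F}_l$ and invoking the reproducing property reduces the left-hand side to $\mathbb{E}_{y\sim Q_{|z}}[g(y) \, s_{Q_{|z}, i}(y) + \partial_{y_i} g(y)]$, which integration by parts rewrites as the boundary integral $\oint_{\partial \mathcal{Y}} g(y) f_{Q_{|z}}(y) n_i(y)\,\mathrm{d}S(y)$. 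For $g = l(\cdot, y')$ this vanishes by the first boundary hypothesis, and an analogous computation for feature vectors aligned with $\nabla l$ consumes the second boundary hypothesis. Since $\operatorname{span}\{l(\cdot, y') : y' \in \mathcal{Y}\}$ is dense in $\mathcal{F}_l$ and the functional $g \mapsto \langle g, \mathbb{E}_{y\sim Q_{|z}}[\xi_{Q_{|z}}(y,\cdot)_i]\rangle$ is continuous on $\mathcal{F}_l$ (bounded by $\|g\|_{\mathcal{F}_l}\,\mathbb{E}_{y\sim Q_{|z}}\|\xi_{Q_{|z}}(y,\cdot)_i\|_{\mathcal{F}_l}$, finite under the lemma's integrability assumption on $\|K_z\xi_{Q_{|z}}(y,\cdot)\|_{\mathcal{F}_K}$), the vanishing extends to every $g \in \mathcal{F}_l$.

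\textbf{Step 3 --- Collapse the MMD and the special-case formula.} Factor $k_{Q_{|\cdot}}((z,y),(z',y')) = \langle K(z, z')\xi_{Q_{|z}}(y, \cdot), \xi_{Q_{|z'}}(y', \cdot)\rangle$ and take $\mathbb{E}_{y'\sim Q_{|z'}}$ inside the inner product via Bochner integrability; the Stein identity kills the right-hand factor, so both $E_{PQ}$ and $E_{QQ}$ vanish, since each contains at least one integral against the model conditional in the $y$ or $y'$ slot. This gives $\operatorname{MMD}^2_{k_{Q_{|\cdot}}}(\mathbb{P}(Z,Y), \mathbb{P}_{Q_{|\cdot}}(Z,Y)) = E_{PP} = D_{Q_{|\cdot}}(\mathbb{P})$. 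For the stated special case $K = k\,I_{\mathcal{F}_l^{d_y}}$, substitute $K_{i,j}(z, z') = k(z,z')\delta_{ij} I_{\mathcal{F}_l}$ into $A$ to obtain $[A]_{ij} = k(z,z')\, l(y,y')\,\delta_{ij}$, and expanding the quadratic form $(s_{Q_{|z'}}(y') + \nabla_{y'})^\top A\,(s_{Q_{|z}}(y) + \nabla_y)$ directly yields the stated four-term closed formula. Every step other than Step~2 is purely algebraic bookkeeping once the Stein identity is in place.
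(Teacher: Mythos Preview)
Your proof is correct and follows essentially the same route as the paper: identify $k_{Q_{|\cdot}}$ as the inner product of the feature map $(z,y)\mapsto K_z\xi_{Q_{|z}}(y,\cdot)$, invoke the Stein identity $\mathbb{E}_{y\sim Q_{|z}}\xi_{Q_{|z}}(y,\cdot)=0$ from the boundary hypotheses, and conclude that the model-side contributions to the MMD vanish. The only cosmetic difference is that the paper phrases the collapse via mean embeddings (showing $\mu_{\mathbb{P}_{Q_{|\cdot}}}=0$ so that $\operatorname{MMD}^2=\|\mu_{\mathbb{P}(Z,Y)}\|^2$) whereas you phrase it via the three-term expansion $E_{PP}-2E_{PQ}+E_{QQ}$; and the paper cites \cite[Lemma~5.1]{Chwialkowski16KGOF} for the Stein identity whereas you supply a self-contained integration-by-parts plus density argument.
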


\begin{proof}
From a similar calculation as in \cref{app-sec:cgof-general-kernel}~\cite[cf.][Section~A.2]{jitkrittum2020testing} we obtain that
\begin{equation*}
    k_{Q_{|\cdot}}((z, y), (z', y')) = \bigg\langle K_z \xi_{Q_{|z}}(y, \cdot), K_{z'} \xi_{Q_{|z'}}(y', \cdot) \bigg\rangle_{\mathcal{F}_K}.
\end{equation*}

Thus $k_{Q_{|\cdot}}$ is an inner product of the features of $(z, y)$ and $(z', y')$ given by the feature map $(z, y) \mapsto K_z \xi_{Q_{|z}}(y, \cdot) \in \mathcal{F}_K$,
and therefore $k_{Q_{|\cdot}}$ is a positive-definite kernel.
Moreover, from our assumption we obtain
\begin{equation*}
    \E_{(z, y) \sim \mathbb{P}(Z, Y)} {|k_{Q_{|\cdot}}((z, y), (z, y))|}^{1/2} = \E_{(z, y) \sim \mathbb{P}(Z, Y)} \left\|K_{z} \xi_{Q_{|z}}(y, \cdot) \right\|_{\mathcal{F}_K} < \infty.
\end{equation*}
Thus the mean embedding $\mu_{\mathbb{P}(Z, Y)} \in \mathcal{F}_K$ of $\mathbb{P}(Z, Y)$ exists~\citep[][Lemma~3]{gretton2012kernel}.

Due to the Bochner integrability of $(z, y) \mapsto K_{z} \xi_{Q_{|z}}(y, \cdot)$ expectation and inner product commute~\citep[see][Definition~A.5.20]{Steinwart2008SVM}, and hence we have
\begin{equation*}
\begin{split}
\E_{(z, y) \sim \mathbb{P}_{Q_{|\cdot}}(Z, Y)} \E_{(z', y') \sim \mathbb{P}_{Q_{|\cdot}}(Z, Y)} k_{Q_{|\cdot}}((z, y), (z', y'))
&= \left \|\E_{(z, y) \sim \mathbb{P}_{Q_{|\cdot}}(Z, Y)} K_{z} \xi_{Q_{|z}}(y, \cdot) \right\|^2_{\mathcal{F}_K} \\
&= \left \|\E_{z \sim \mathbb{P}(Z)}\E_{y \sim Q_{|z}} K_{z} \xi_{Q_{|z}}(y, \cdot) \right\|^2_{\mathcal{F}_K} \\
&= \left \|\E_{z \sim \mathbb{P}(Z)} K_z \E_{y \sim Q_{|z}} \xi_{Q_{|z}}(y, \cdot) \right\|^2_{\mathcal{F}_K}.
\end{split}
\end{equation*}
Due to the last assumption~\citep[][Lemma~5.1]{Chwialkowski16KGOF} we know that
\begin{equation*}
\E_{y \sim Q_{|z}} \xi_{Q_{|z}}(y, \cdot) = 0,
\end{equation*}
which implies
\begin{equation*}
    \E_{(z, y) \sim \mathbb{P}_{Q_{|\cdot}}(Z, Y)} \E_{(z', y') \sim \mathbb{P}_{Q_{|\cdot}}(Z, Y)} k_{Q_{|\cdot}}((z, y), (z', y')) = 0.
\end{equation*}
Thus the mean embedding $\mu_{\mathbb{P}_{Q_{|\cdot}}(Z, Y)} \in \mathcal{F}_K$ of $\mathbb{P}_{Q_{|\cdot}(Z, Y)}$ exists and satisfies $\|\mu_{\mathbb{P}_{Q_{|\cdot}}(Z, Y)}\|^2_{\mathcal{F}_K} = 0$, and hence $\mu_{\mathbb{P}_{Q|\cdot}(Z, Y)} = 0$.
We obtain~\citep[][Lemma~4]{gretton2012kernel} that
\begin{equation*}
\begin{split}
\operatorname{MMD}^2_{k_{Q_{|\cdot}}}(\mathbb{P}(Z, Y), \mathbb{P}_{Q_{|\cdot}}(Z, Y)) &= \|\mu_{\mathbb{P}(Z, Y)} - \mu_{\mathbb{P}_{Q_{|\cdot}}(Z, Y)}\|^2_{\mathcal{F}_K} \\
&= \|\mu_{\mathbb{P}(Z, Y)}\|^2_{\mathcal{F}_K} \\
&= \E_{(z, y) \sim \mathbb{P}(Z, Y)} \E_{(z', y') \sim \mathbb{P}(Z, Y)} k_{Q_{|\cdot}}((z, y), (z', y')) \\
&= \E_{(z, y) \sim \mathbb{P}(Z, Y)} \E_{(z', y') \sim \mathbb{P}(Z, Y)} \bigg\langle K_z \xi_{Q_{|z}}(y, \cdot), K_{z'} \xi_{Q_{|z'}}(y', \cdot) \bigg\rangle_{\mathcal{F}_K} \\
&= D_{Q_{|\cdot}}(\mathbb{P}),
\end{split}
\end{equation*}
where the last equality follows from \cite[][Section~A.2]{jitkrittum2020testing}.
\end{proof}

\section{Calibration implies expected coverage}

We show that the sense of calibration employed by our tests implies posterior coverage in the sense of \citet{Hermans2021}.
Again let us note $P_{|x}(\cdot)$ for a model of the conditional distribution $\mathbb{P}(Y \in \cdot \mid X = x)$.
Moreover, we assume that $P_{|x}$ has a density $f_{P_{|x}}$ for $\mathbb{P}(X)$-almost every $x$.

For level $1 - \alpha \in [0, 1]$, let $\Theta_{P_{|x}}(1 - \alpha)$ be the highest density region of a probabilistic model $P_{|x}$ with density $f_{P_{|}}$.
It is defined~\citep[see, e.g.,][]{Hyndman1996} by
\begin{equation*}
    \Theta_{P_{|x}}(1 - \alpha) \coloneqq \left\{ y \colon f_{P_{|x}}(y) \geq c_{P_{|x}}(1 - \alpha) \right\}
\end{equation*}
where
\begin{equation*}
    c_{P_{|x}}(1 - \alpha) := \sup \left\{ c \colon \int_{\left\{\tilde{y} \colon f_{P_{|x}}(\tilde{y}) \geq c \right\}} \, P_{|x}(\mathrm{d}y) \geq 1 - \alpha \right\}.
\end{equation*}
Hence, by definition \citep[see, e.g.,][]{Hermans2021}
\begin{equation*}
    \E_{y \sim P_{|x}} \mathbbm{1}\big\{y \in \Theta_{P_{|x}}(1 - \alpha)\big\}
    =\int_{\Theta_{P_{|x}}(1 - \alpha)} \, P_{|x}(\mathrm{d}y) \geq 1 - \alpha.
\end{equation*}

Assume that model $P_{|\cdot}$ is calibrated.
By definition, it satisfies
\begin{equation*}
    \mathbb{P}(Y \in \cdot \mid P_{|X}) = P_{|X} \qquad \mathbb{P}(X)\text{-almost surely}.
\end{equation*}
Hence, for all $\alpha \in [0,1]$, we obtain
\begin{equation*}
\begin{split}
     \E_{(x, y) \sim \mathbb{P}(X, Y)} \mathbbm{1}\big\{y \in \Theta_{P_{|x}}(1 - \alpha)\big\} 
     &= \E_{(P_{|x}, y) \sim \mathbb{P}(P_{|X}, Y)} \mathbbm{1}\big\{y \in \Theta_{P_{|x}}(1 - \alpha)\big\} \\
     &= \E_{P_{|x} \sim \mathbb{P}(P_{|X})} \E_{y \sim P_{|x}} \mathbbm{1}\big\{y \in \Theta_{P_{|x}}(1 - \alpha)\big\} \\
     &\geq \E_{P_{|x} \sim \mathbb{P}(P_{|X})} \big[1 - \alpha \big] \\
     &= 1 - \alpha.
\end{split}
\end{equation*}
Thus model $P_{|\cdot}$ has expected coverage for all $\alpha \in [0, 1]$.

\section{Diffusion-Limit and Universality}
\subsection{Fisher divergence as a diffusion limit}\label{app-sec:limit-fisher-divergence}

We recall that for a map $ f $  and a measure  $ \mu $, the push-forward measure of $ \mu $ by $ f $, noted $ f_{\#} \mu $, 
is the measure on the image space of $ f $ which verifies, for any measurable function $ g $
\begin{equation*}
    \int_{  }^{  } g(x) \, f_{\#} \mu(\mathrm{d}x) = \int_{  }^{  } g(f(x)) \, \mu(\mathrm{d}x).
\end{equation*}

To prove the differential inequality linking the MMD and the KGFD, we rely on the following reformulation of the Fokker-Planck equation:
\begin{equation*}
\begin{split}
    \frac{\partial \mu(x, t)}{\partial t} &= \operatorname{div}_x(-\mu(x, t) s_p(x)) + \Delta_x \mu(x, t) \\
					  &= \operatorname{div}_x(-\mu(x, t) s_p(x)) + \operatorname{div}_x \nabla_{ x } \mu(x, t) \\
					  &= \operatorname{div}_x(-\mu(x, t) s_p(x)) + \operatorname{div}_x(\mu(x, t) \nabla_x \log \mu(x, t)) \\
					  &= \operatorname{div}_x(- \mu(x, t)(s_p(x) - \nabla_x \log \mu(x, t)).
\end{split}
\end{equation*}
We remark that since the density $\mu(x, t)$ is twice differentiable in $ x $ and differentiable in $ t $~\citep{johnson2004information}, this equation holds in the strong sense, and not only in the sense of distributions. Because of that, one has
\begin{equation*}
    \partial_t \mu(x, t) = \lim_{ \Delta \to 0 } \frac{\mu(x, t + \Delta) - \mu(x, t)}{\Delta}.
\end{equation*}

Let us consider an RKHS $ \mathcal  H $ with kernel $ k $, and let  $ h \in \mathcal  H $.
Let us define $m_t(x) := m(x, t) := \mu_{\nu,p}(x, t) - \mu_{\nu,q}(x, t) $ and we note $ \operatorname{MMD}(m_t)$ the function given by
\begin{equation*}
\operatorname{MMD}(m_t) = \left[\iint k(x, y) m_t(x) m_t(y) \, \mathrm{d}x \, \mathrm{d}y\right]^{1/2} = \operatorname{MMD}(\mu_{\nu,p}(\cdot, t), \mu_{\nu,q}(\cdot, t)).
\end{equation*}
To show that $ \lim_{ t  \to 0 } \frac{ d }{ \text{d}t }\operatorname{MMD}(m_t) = \operatorname{KGFD}(p, q) $,
we first analyze the differential properties of the easier to handle $\operatorname{MMD}^2$ and complete the proof using a chain rule argument.
The first variation (also called Gateaux Derivative) of $m  \mapsto \operatorname{MMD}^2(m)$ is a linear functional
on the space of functions
\begin{equation*}
    \left\{f - g \,\middle|\, f, g \colon \mathcal{X} \times [0, \infty) \to \mathbb{R} \quad \text{with} \quad \forall t \geq 0 \colon \int_{\mathcal{X}} f(x, t) \,\mathrm{d}x = \int_{\mathcal{X}} g(x, t) \,\mathrm{d}x = 1 \right\},
\end{equation*} given by
\begin{equation*}
\frac{ \delta \operatorname{MMD}^2 }{ \delta m} \colon f \mapsto \int 2 k(x, y) m_t(x) f(y) \,\mathrm{d}x \,\mathrm{d}y.
\end{equation*}
Using the chain rule for Gateaux derivatives, we have that
\begin{equation*}
\begin{split}
    \frac{\mathrm{d} \operatorname{MMD}^2(m)}{\mathrm{d} t} &= \frac{ \mathrm{d} \operatorname{MMD}^2}{ \mathrm{d} m}(m) \frac{ \mathrm{d} m}{ \mathrm{d} t}\\ 
	&= \int 2 k(x,y) m_t(x) \frac{\mathrm{d} m}{\mathrm{d} t}(y) \,\mathrm{d}x \,\mathrm{d}y.
\end{split}
\end{equation*}
From the Fokker-Planck Equation, we have that
\begin{equation*}
\begin{split}
    \frac{\mathrm{d}m }{\mathrm{d}t} &= \partial_t \mu_{\nu, p} - \partial_t \mu_{\nu, q} \\
				    &= \operatorname{div}_x (\mu_{\nu, p} \nabla_x \log \frac{p}{ \mu_{\nu, p} }) - \operatorname{div}_x (\mu_{\nu, q} \nabla_x \log \frac{q}{ \mu_{\nu, q} }) \\
				    &= \operatorname{div}_x (\nu \nabla_x \log \frac{ p }{ \nu}) - \operatorname{div}_x (\nu \nabla_x \log \frac{ q }{ \nu }) + o(1) \\
                   &= \operatorname{div}_x (\nu \nabla_x \log \frac{p}{q}) + o(1)
\end{split}
\end{equation*}
Plugging the last equation in the chain rule, we have:
\begin{equation*}
\begin{split}
	\frac{\mathrm{d} \operatorname{MMD}^2(m)}{\mathrm{d}t}					&= \int_{  }^{  } 2 m_t(x) \text{div}_y \nu(y) \nabla_{ y } \log \frac{ p }{ q }(y) k(x,y) \text{d}x \text{d}y + o(1) \\
						& = \int_{  }^{  } 2m_t(x) \left \langle \nabla_{ y }  k(x, y), \nu(y) \nabla_{ y } \log \frac{ p }{ q }(y) \right \rangle \,\mathrm{d}x \,\mathrm{d}y + o(1).
\end{split}
\end{equation*}
Similarly, since $ m_0 = \mu_{\nu, p}(\cdot, 0) - \mu_{\nu, q}(\cdot, 0) = \nu - \nu = 0 $, we have $ m_t(x) = t \partial_t m(x, 0) + o_x(t)$.
The calculation follows as:
\begin{equation*}
\begin{split}
    \frac{\mathrm{d} \operatorname{MMD}^2(m)}{ \mathrm{d} t} &= \int 2 t \times \partial_t m(x, t) \left \langle \nabla_{ y }  k(x, y), \nu(y) \nabla_{ y } \log \frac{ p }{ q }(y) \right \rangle \,\mathrm{d}x \,\mathrm{d}y + o(t) \\
	&= \int 2 t \times \operatorname{div}_x \nu(x) \nabla_{ x } \log \frac{ p }{ q }(x) \left \langle \nabla_{ y }  k(x, y), \nu(y) \nabla_{ y } \log \frac{ p }{ q }(y) \right \rangle \,\mathrm{d}x \,\mathrm{d}y  + o(t) \\
	&= \int 2 t \times \left \langle  \nu(x)\nabla_{ x }\log \frac{ p }{ q }(x), \nabla_{ x }  \left \langle \nabla_{ y }  k(x, y), \nu(y) \nabla_{ y } \log \frac{ p }{ q }(y) \right \rangle \right \rangle \,\mathrm{d}x \,\mathrm{d}y  + o(t)\\
	&= \int 2 t \times \left \langle  \nu(x)\nabla_{ x }\log \frac{ p }{ q }(x), \nabla_{ x }  \nabla_{ y }  k(x, y), \nu(y) \nabla_{ y } \log \frac{ p }{ q }(y) \right \rangle \,\mathrm{d}x \,\mathrm{d}y + o(t).
\end{split}
\end{equation*}
To get rid of the degenerate scaling as $ t  \to 0 $, we now focus on (the
derivative of) $ \sqrt {\operatorname{MMD}^2(m_t)}  $ as $ t  \to 0 $. Notice that
since $ \operatorname{MMD}(m_0) = 0 $, the derivative of $ \sqrt {\operatorname{MMD}^2(m_t)}$
does not exist a priori for $ t=0 $: we consider instead  $ \frac{\mathrm{d}}{\mathrm{d}t } \sqrt {\operatorname{MMD}^2(m_t)}\Big|_{t=t}  $, and extend it by continuity by setting $
t  \to 0 $. We have:
\begin{equation*}
\begin{split}
\frac{\mathrm{d}\sqrt {\operatorname{MMD}^2(m_t)}  }{ \mathrm{d}t } = \frac{1}{2 \sqrt {\operatorname{MMD}^2(m_t)} } \frac{ \mathrm{d} \operatorname{MMD}^2(m_t)}{ \mathrm{d} t}.
\end{split}
\end{equation*}
As
\begin{equation*}
\operatorname{MMD}^2(m_t) = \int_{  }^{  } k(x, y) m_t(x) m_t(y) \,\mathrm{d}x \,\mathrm{d}y
\end{equation*}
we obtain through similar calculations that
\begin{equation*}
    \operatorname{MMD}^2(m_t) = \int_{  }^{  } \int_{  }^{  } t^2\left \langle  \nu(x)\nabla_{ x }\log \frac{ p }{ q }(x), \nabla_{ x }  \nabla_{ y }  k(x, y), \nu(y) \nabla_{ y } \log \frac{ p }{ q }(y) \right \rangle \,\mathrm{d}x \,\mathrm{d}y + o(t)
\end{equation*}
from which the results follows. Note that the matrix-valued kernel $ (K(x,
y))_{ij} = (\nabla_{x}  \nabla_{y}  k(x, y))_{ij}$ is positive
definite, a result akin to one of \citet{zhou2008derivative} but for the matrix-valued case.
Indeed, for all $ x, y \in
\mathcal  X $, $ z, t \in  \mathbb{R}^d $, 
\begin{equation*}
    z K(x, y)t = \left\langle \sum\limits_{i=1}^{d} z_i \partial_i k(x, \cdot), \sum\limits_{i=1}^{d} t_i \partial_i k(y, \cdot) \right \rangle_{\mathcal H} 
\end{equation*}
where $ \partial_i k(x, \cdot) \in \mathcal  H $~\citep{zhou2008derivative}.
In the following, we write $ \phi(x, y) = \sum_{i=1}^{d} y_i \partial_i
k(x_i, \cdot) $. Now, for all sets of $ \{ x^{i} \}_{i=1}^{n}
\in \mathcal  X  $,
$\{ y^{j} \}_{i=1}^{n} \in  \mathbb{R}^d$, we have
\begin{equation*}
\begin{split}
    \sum\limits_{ i, j=1 }^{ n } \left \langle K(x^i, x^j) y^j, y^i \right \rangle_{\mathbb{R}^d}
    &= \sum\limits_{ i, j=1 }^{ n } \left \langle \phi(x^i, y^i), \phi(x^j, y^j)  \right \rangle_{\mathcal  H} \\ 
    &=  \left \langle \sum\limits_{ i=1 }^{  n} \phi(x^i, y^i), \sum\limits_{ i=1 }^{ n } \phi(x^i, y^i) \right \rangle_{\mathcal  H}  \geq  0
\end{split}
\end{equation*}
from which it follows that $ K $ is indeed positive definite~\citep[Theorem
2.1]{micchelli2005learning}.

\subsection{Universality of the Exponentiated-GFD and Exponentiated-KGFD kernel}
To prove the universality of $K_\nu$ and $K_{\nu, K}$ under the assumptions discussed in the related propositions,
we rely on the following theorem \cite[Theorem 2.2]{christmann2010universal}.
\begin{theorem}
On a compact metric space $(\mathcal Z, d_\mathcal Z$ ) and for a continuous
and injective map $\phi : \mathcal Z \mapsto H$,  where H is a separable
Hilbert space, the kernel
$K(z, z') = e^{-\gamma \| \phi(z) - \phi(z')\|^2_H}$ is universal.
\end{theorem}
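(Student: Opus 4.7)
The plan is to reduce the claim to the Stone--Weierstrass theorem by isolating the algebraic structure of exponential inner-product kernels, and then to transfer universality from that auxiliary kernel to $K$. First I decompose
\begin{equation*}
K(z,z') = e^{-\gamma\|\phi(z)\|_H^2}\cdot e^{2\gamma\langle\phi(z),\phi(z')\rangle_H}\cdot e^{-\gamma\|\phi(z')\|_H^2} = g(z)\,\tilde K(z,z')\,g(z'),
\end{equation*}
where $g(z)=e^{-\gamma\|\phi(z)\|_H^2}$ is strictly positive and continuous on $\mathcal Z$ and $\tilde K(z,z')=e^{2\gamma\langle\phi(z),\phi(z')\rangle_H}$. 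Compactness of $\mathcal Z$ together with continuity of $\phi$ forces $\phi(\mathcal Z)$ to be bounded in $H$, so the Taylor expansion
\begin{equation*}
\tilde K(z,z')=\sum_{n=0}^{\infty}\frac{(2\gamma)^n}{n!}\langle\phi(z)^{\otimes n},\phi(z')^{\otimes n}\rangle_{H^{\otimes n}}
\end{equation*}
converges uniformly on $\mathcal Z\times\mathcal Z$ and exhibits $\tilde K$ as a convergent sum of positive-definite kernels whose feature maps are the symmetric tensor powers of $\phi$.

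Second, I would use this decomposition to locate a large algebra inside the RKHS $\mathcal H_{\tilde K}$. Each summand $\tilde K_n$ has RKHS containing all functions of the form $z\mapsto\langle\phi(z)^{\otimes n},w\rangle$ for $w\in H^{\otimes n}$; combining terms shows that $\mathcal H_{\tilde K}$ contains the unital real subalgebra $\mathcal A\subset C(\mathcal Z)$ generated by the continuous linear functionals $z\mapsto\langle\phi(z),v\rangle_H$, $v\in H$, together with the constants (arising from the $n=0$ term). Because $\phi$ is injective, for every $z\neq z'$ there exists $v\in H$ with $\langle\phi(z)-\phi(z'),v\rangle\neq 0$, so $\mathcal A$ separates points. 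Stone--Weierstrass on the compact Hausdorff space $\mathcal Z$ then gives $\overline{\mathcal A}=C(\mathcal Z)$ in the uniform topology, hence $\overline{\mathcal H_{\tilde K}}=C(\mathcal Z)$, i.e.\ $\tilde K$ is universal.

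Third, I transfer universality from $\tilde K$ to $K$ using the standard pointwise-product identity for reproducing kernels: whenever $K(z,z')=g(z)g(z')\tilde K(z,z')$ with $g\colon\mathcal Z\to\mathbb R$ nowhere vanishing, one has $\mathcal H_K=\{g\cdot h:h\in\mathcal H_{\tilde K}\}$ with $\|g\cdot h\|_K=\|h\|_{\tilde K}$. Since $g$ and $1/g$ are bounded on the compact $\mathcal Z$, multiplication by $g$ is a topological isomorphism of $C(\mathcal Z)$ onto itself, so uniform density of $\mathcal H_{\tilde K}$ in $C(\mathcal Z)$ transports to uniform density of $\mathcal H_K$ in $C(\mathcal Z)$, which is the claimed universality.

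The main obstacle lies in making step two fully rigorous. Writing $\tilde K$ as a sum of kernels is formally immediate, but identifying $\mathcal H_{\tilde K}$ with the corresponding Hilbert direct sum (or at least verifying that it \emph{contains} every element of $\mathcal A$ with controllable RKHS norm) requires either invoking the Moore--Aronszajn theorem through the explicit tensor-power feature map $z\mapsto\bigoplus_n\sqrt{(2\gamma)^n/n!}\,\phi(z)^{\otimes n}\in\bigoplus_nH^{\otimes n}$ and checking convergence in the direct-sum Hilbert space, or using the Aronszajn sum-of-kernels characterization. A secondary subtlety is the product identity in step three, which should be derived from the feature-map description rather than quoted as folklore, after which the final universality conclusion is immediate.
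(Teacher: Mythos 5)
Your argument is correct, and it is essentially the standard proof of this result: the paper itself states the theorem without proof, importing it as Theorem~2.2 of Christmann and Steinwart (2010), whose proof proceeds exactly as you propose --- normalize away the diagonal factors $e^{-\gamma\|\phi(z)\|_H^2}$, Taylor-expand the exponential of the inner product into tensor-power kernels, apply Stone--Weierstrass using injectivity of $\phi$ to separate points, and transfer density back through multiplication by the nowhere-vanishing continuous weight $g$. The technical points you flag (Aronszajn's sum-of-kernels characterization for locating the algebra inside $\mathcal H_{\tilde K}$, and deriving the product identity $\mathcal H_K = g\cdot\mathcal H_{\tilde K}$ from the feature map $z \mapsto g(z)\Phi(z)$) are exactly the ones handled in that reference, so nothing essential is missing.
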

We first focus on the universality of $K_\nu$.
We set as our goal to apply that theorem to our setting, in which $\mathcal Z := \mathcal P_\mathcal X$ 
is a (sub)set of probability densities, which needs to be associated with a suitably chosen metric
in order to make $\mathcal P_\mathcal X$ to be compact, and $\phi$ continuous.
As bounded subsets of differentiable densities, whose elements can be framed as elements
of the Sobolev space of first order $\mathcal W^{2, 1}(\nu)$~\citep{taylor1996partial}), are not compact a priori,
we restrict ourselves to twice-differentiable densities with bounded Sobolev norm of second order, i.e., to $\mathcal W^{2,2 }(\nu)$ with norm $\|p\|^2_{\mathcal W^{2, 2}} \coloneqq \|p\|_{\mathcal L_2(\nu)}^2+ \sum_{i=1}^{d} \|\partial_i p\|^2_{\mathcal L_2(\nu)} + \sum_{i, j=1}^{d} \|\partial_i\partial_j p\|^2_{\mathcal{L}_2(\nu)}$.
From the Rellich-Kondrachov theorem~\citep{taylor1996partial}, 
we know that when $\nu$ has compact support, the canonical canonical injection $I \colon \mathcal W^{2,2}(\nu) \to \mathcal W^{2,1}(\nu)$ is a compact operator.
As a consequence, for any bounded subset $A$ of $\mathcal P_{\mathcal X}$ we thus have that $I(A)$ is compact for $\|f\|_{\mathcal W^{2, 1}}^2:=\|f\|_{\mathcal L_2(\nu)}^2+ \sum_{i=1}^{d} \|\partial_i f\|^2_{\mathcal L_2(\nu)} $, which implies that any bounded subset $A$ of $P_\mathcal X$ is compact for $d(z, z') = \|z - z'\|_{\mathcal W^{2, 1}}$.
To apply the above theorem, it remains to prove the continuity and injectivity of $\phi: p \mapsto \nabla \log p$ under this metric (in that case the separable Hilbert space $H$ is set to $\mathcal L_2(\nu)$). And indeed, for such a choice of $d$, $\phi$ and $H$, $\phi$ is continuous. To prove this fact, remark that differentiable densities with full support on $\mathcal X$ are bounded away from $0$, making the use of a $\phi \colon p \mapsto \nabla \log p = \nabla p / p$ continuous.
Moreover, $\phi$ is injective as $d_{W^{2, 1}}(p, q) \coloneqq \|p - q\|_{W^{2, 1}} \neq 0$ implies $\|\nabla \log p  - \nabla \log q\|_{\mathcal L_2(\nu)} \neq 0$.
Thus, all conditions of \cite[Theorem 2.2]{christmann2010universal} are satisfied, and the result follows as a consequence.

We now move on to prove the universality of $K_{\nu, K}$.
The proof follows the same reasoning as the proof of the universality of $K_\nu$, the only difference being the fact that the feature map $\tilde \phi$ of $K_{K, \nu}$ is given by $T_\nu \circ \phi$, where $\phi \colon p \mapsto \nabla \log p$ and $T_{K, \nu} \colon \mathcal{L}(\mathcal{X}, \mathbb{R}^d) \to  \mathcal{H}_{K}$ is given by
\begin{equation*}\label{eq:k-int-op}
T_{K, \nu} \colon f \mapsto \int_{\mathcal{X}} K_x f(x)  \, \nu(\mathrm{d}x).
\end{equation*}
However, if $\nu$ is a probability measure and $K$ is bounded, then $T_{K, \nu}$ is a bounded operator, and thus continuous, making $\tilde \phi$ continuous.
Moreover, if $K$ is characteristic, $T_{K, \nu}$ is injective.
Thus $\tilde \phi$ is injective and continuous, from which the result follows by \cite{christmann2010universal}.

\section{Background on Stein and Fisher divergences} \label{app-sec:background-divergences}

\paragraph{The Fisher Divergence} 
Consider two continuously differentiable densities $p$ and $q$ on $\mathbb{R}^d$.
Then the Fisher divergence~\citep{sriperumbudur2017density,johnson2004information} between $p$ and $q$ is defined as:
\begin{equation*}
\operatorname{FD}(p||q) = \int_{ \mathbb{R}^d }^{  } \left \| \nabla_{  } \log p(x) - \nabla_{  } \log q(x) \right \|_{2}^{2} p(x) \,\mathrm{d}x.
\end{equation*}
We refer to \cite{sriperumbudur2017density} for an overview of the properties of the Fisher divergence, including its relative strength w.r.t.\ other divergences, and other formulations. The Fisher divergence was used for learning statistical models of some training data in \cite{hyvarinen2005estimation, sriperumbudur2017density}, and more recently in \cite{song2019generative}.

\paragraph{Stein Discrepancies} 
Of proximity to the Fisher divergence is the family of Stein discrepancies~\citep{anastasiou2022stein}.
Stein discrepancies build upon the concept of Stein operators, which are operators $\mathcal{A}_{\mathbb{P}}$ such that
\begin{equation*} 
\E_{\mathbb{Q}}\left[ \mathcal  A_{\mathbb P} f \right ] = 0 \iff \mathbb{ Q } = \mathbb{ P }
\end{equation*}
for any $ f $ within a set $ \mathcal  G(\mathcal  A_{\mathbb{ P }}) \subset \operatorname{dom}(\mathcal A_{\mathbb P}) $ called the \emph{Stein class}
of $ \mathcal  A_{\mathbb P} $.
Following this definition, the $ \mathcal  A_{\mathbb{ P }} $-stein discrepancy is defined as
\begin{equation*} 
\operatorname{SD}_{\mathcal  A_{\mathbb{ P }}}(\mathbb{ P }, \mathbb{ Q }) = \sup_{ f \in G(\mathcal  A) }\left \|\E_{ \mathbb{ Q } }  \mathcal  A f \right \|
\end{equation*}
which satisfies by construction the axioms of a \emph{dissimilarity} (or \emph{divergence}) measure between $ \mathbb{ P } $ and $ \mathbb{ Q } $.

\paragraph{Link Between the Fisher divergence and Diffusion Stein Discrepancies}
Perhaps the most famous Stein discrepancy is the one that sets $\mathcal  {A}_{\mathbb{P}}$ to be the infinitesimal generator of the isotropic diffusion process toward $\mathbb{ P } $~\citep{gorham2019measuring}:
\begin{equation*}
    \begin{cases}
    \mathrm{d}X_t &= \nabla_{  } \log p(X_t) \,\mathrm{d}t + \sqrt{2}\,\mathrm{d}W_t \\
    (\mathcal  A_{d, \mathbb{P}}f)(\cdot)  &= \left \langle \nabla_{  }  \log p(\cdot), \nabla_{  }  f   \right \rangle  + \left \langle \nabla_{  } , \nabla_{  } f   \right \rangle 
    \end{cases} 
\end{equation*}
Recalling that $\E_{ \mathbb{   P} }\left [ \mathcal A_{d, \mathbb{ P }} f \right ]  = 0$ for all $f \in \mathcal  G(\mathcal  A_{d, \mathbb{ P }}) $, we obtain the following formulation for the diffusion Stein discrepancy
\begin{equation*}
\begin{split}
    \operatorname{SD}_{\mathcal  A_{d, \mathbb{P}}}(\mathbb{ P }, \mathbb{ Q }) &\coloneqq \sup_{  f } \left \| \E_{  \mathbb{ Q } } \mathcal  A_{d, \mathbb{ P }} f \right \| = \sup_{  f } \left \| \E_{  \mathbb{ Q } } (\nabla_{  }   \log p - \nabla_{  }   \log q)^{\top} \nabla_{  }  f  \right \| \\
							    &= \sup_{g =  \nabla_{  } f} \left \| \E_{  \mathbb{ Q } } (\nabla_{  }   \log p - \nabla_{  }   \log q)^{\top} g  \right \|,
\end{split}
\end{equation*}
highlighting the connection between the Fisher divergence and the diffusion Stein discrepancy.

\paragraph{Link Between the Fisher divergence and the Kernelized Stein Discrepancy}
Given a RKHS $\mathcal{H}$ such that $B_{\mathcal{H}^{\otimes d}}(0_{\mathcal{H}^{\otimes d}}, 1)$ is a Stein class for $\mathcal{A}_{d, \mathbb{P}}$, the kernelized Stein discrepancy~\citep{gorham2017measuring} is given by
\begin{equation*}\label{eq:KSD}
\begin{split}
    \operatorname{KSD}(\mathbb{ P }, \mathbb{ Q }) &\coloneqq \sup_{ h = \nabla_{  } f  \in \mathcal  H^{\otimes d}: \left \|h \right \|_{\mathcal H^{\otimes d}} \leq 1 } \left \| \E_{  \mathbb{ Q } } \left \langle \nabla_{  }   \log p(x) - \nabla_{  }   \log q(x), h(x)\right \rangle \right \|\\
					     &=  \sup_{ h = \nabla_{  } f  \in \mathcal  H^{\otimes d}: \left \|h \right \|_{\mathcal H^{\otimes d}} \leq 1 } \left \langle h, \E_{ \mathbb{ Q } } (\nabla_{  }  \log p(x) - \nabla_{  }  \log q(x)  )k(x, \cdot)\right \rangle_{\mathcal H^{\otimes d}}^{1/2}  \\
					     &=  \left \| \E_{ \mathbb{ Q } } \left \lbrack  (\nabla_{  }  \log p(x) - \nabla_{  }  \log q(x)  ) k(x, \cdot) \right \rbrack  \right \|_{\mathcal  H^{\otimes d}} \\
					     &= \left \| I^\star_{k, \mathbb{ Q }} (\nabla_{  }  \log p - \nabla_{  }  \log q  ) \right \|_{\mathcal  H^{\otimes d}}
\end{split}
\end{equation*}
where $I^\star_{k, \mathbb{ Q }}$ is the adjoint of the canonical injection from $ \mathcal  H^{\otimes d} $ to $ (L^2(\mathbb{ Q }))^{\otimes d} $, also known as the \emph{kernel integral operator}.
This derivation shows that the KSD can be seen as a kernelized version of the Fisher divergence.

\paragraph{Link between MMD and KSD} 
It is possible~\citep{gorham2017measuring} to reframe the KSD as an MMD with a specific kernel.
Indeed, given some base kernel $k(x, y)$, define the following ``Stein'' kernel
\begin{equation*}\label{eq:stein-kernel}
    \tilde{ k }(x, y) = \left \langle \nabla_{  }  \log p(x)k(x, \cdot) + \nabla_{  }  k(x, \cdot), \nabla_{  }  \log p(y)k(y, \cdot) + \nabla_{  }  \log k(y, \cdot)     \right \rangle_{\mathcal  H^{\otimes d}}
\end{equation*}
which is positive definite as an inner product of a feature map of $x$.
Then $\mathcal  H_{ \tilde{ k}} = \mathcal A_{d, \mathbb{ P }}( \mathcal  H) $ and $ \left \| f \right \|_{ \mathcal  H_{\tilde{k}}} = \left \| \mathcal  A f \right \|_{\mathcal  H_{k}^{\otimes d}} $.
Moreover, we have that $ \E_{ \mathbb{ P } } \, \tilde{ h} = 0$ for all $\tilde{h} \in \mathcal  H_{ \tilde{k}} $.
By the definition of the KSD, we have that
\begin{equation*} 
\begin{split}
\operatorname{KSD}(\mathbb{ P }, \mathbb{ Q }) &= \sup_{ h   \in \mathcal  H^{\otimes d} \colon \left \|h \right \|_{\mathcal H^{\otimes d}} \leq 1 } \left \| \E_{ \mathbb{ Q } } \mathcal  A_{d, \mathbb{ P }} h  \right \|\\
				       &= \sup_{ h \in \mathcal  H^{\otimes d} \colon \left \|h \right \|_{\mathcal H^{\otimes d}} \leq 1 } \left \| \E_{ \mathbb{ Q } } \mathcal  A_{d, \mathbb{ P }} h  - \E_{  \mathbb{ P } } \mathcal  A_{d, \mathbb{ P }} h\right \|_{\mathcal  H} \\
				       &= \sup_{ h \in \mathcal  H_{\tilde{k}} \colon \left \|h \right \|_{\mathcal  H_{\tilde{k}}} \leq 1 } \left \| \E_{ \mathbb{ Q } }   h  - \E_{  \mathbb{ P } } h\right \|_{\mathcal  H_{\tilde{k}}} \\
				       &= \operatorname{MMD}_{\tilde{ k}}(\mathbb{ P }, \mathbb{ Q }).
\end{split}
\end{equation*}

\paragraph{Differential Inequalities between the KL and the Fisher Divergence} 
It is well known~\citep{carrillo2003kinetic} that the KL divergence can be related to the Fisher divergence by considering the evolution of $\operatorname{KL}(\mathbb{ P }_t||\mathbb{ Q })$ when $ \mathbb{ P }_t $ evolves according to the Fokker-Planck equation
\begin{equation}\label{eq:fokker-planck}
\partial_t p_t(x) = \operatorname{div} ( p_t(x) (\nabla_{  } \log q_t(x) - \nabla_{  }  \log p_t(x)  )), \quad \mathbb P_0=\mathbb P.
\end{equation}
(Two relevant side notes: for any $t\geq 0$, $\mathbb P_t$ is the law at time $t$ of the Markov process $(X_t)_{t\geq 0}$ such that $X_0 \sim \mathbb P$ and undergoing an isotropic diffusion towards $\mathbb Q$. Moreover, \cref{eq:fokker-planck} is also the Wasserstein gradient flow equation of $\operatorname{KL}(\cdot||\mathbb Q)$ starting from $\mathbb  P$).
Recalling that \cref{eq:fokker-planck} is satisfied in the sense of distributions, and relying on Gateaux-Derivative formulas for Free Energy-type functionals~\citep[see][for more precise statements]{ambrosio2005gradient}, we have:
\begin{equation*}
\begin{split}
    \frac{\mathrm{d}\text{KL}(\mathbb P_t || \mathbb Q) }{\mathrm{d}t} &= \frac{ \partial \operatorname{KL} }{\mathrm{d}\mathbb{ P } }\bigg\rvert_{\mathbb{ P }_t} \frac{ d \mathbb{ P }_t }{\mathrm{d}t } \\
				      &= \int_{ }^{  } \left \langle  \nabla_{  } (\log p_t(x) - \log q_t(x) ), (\nabla_{  }  \log q_t - \nabla_{  }  \log p_t  )\right \rangle  \,\mathrm{d}\mathbb{ P }_t(x) \\
				      &= - \operatorname{FD}(\mathbb{ P }_t, \mathbb{ Q }).
\end{split}
\end{equation*}

\begin{figure}[htbp]
    \centering%
    \includegraphics[width=.7\textwidth]{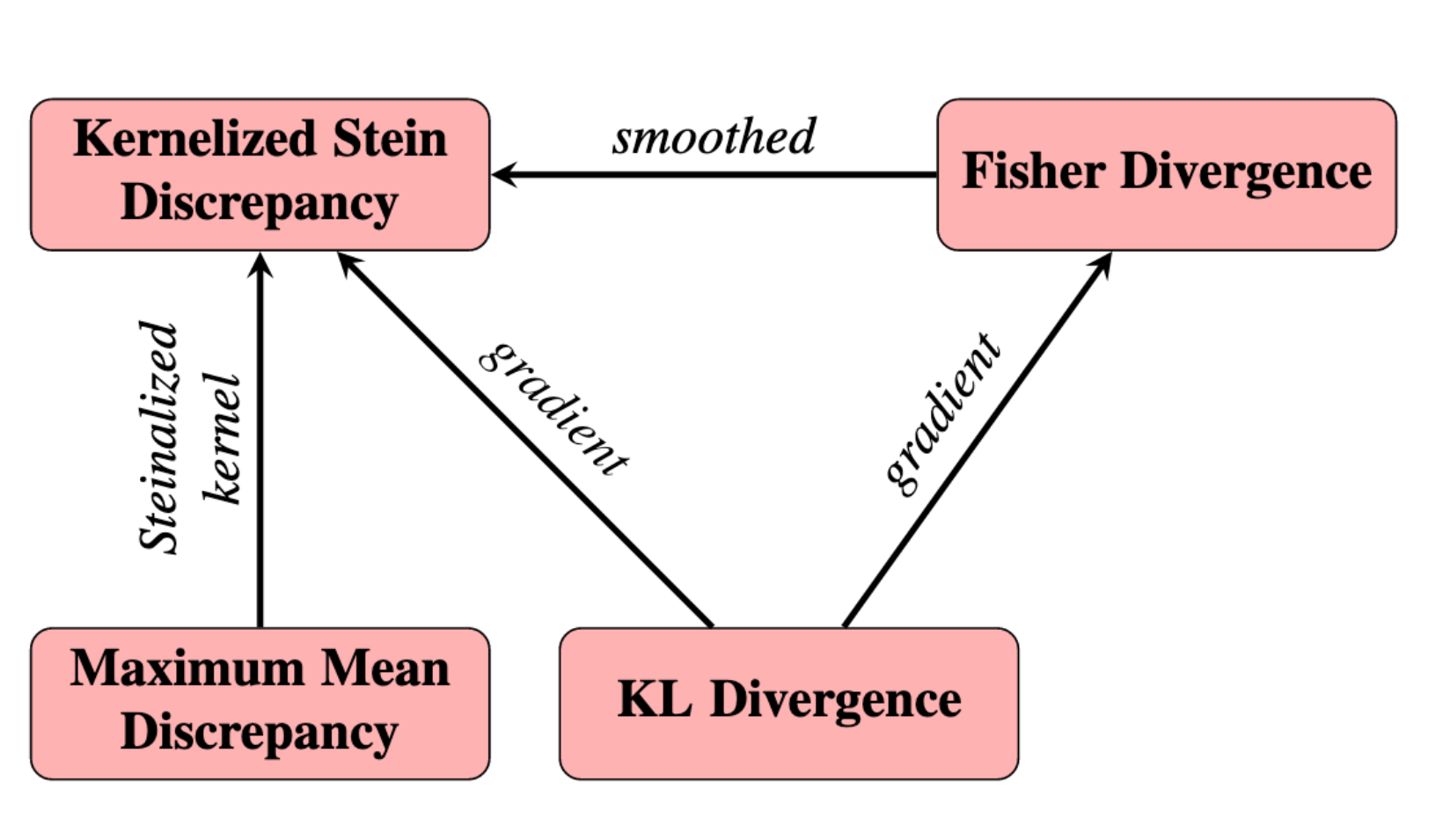} %
    \label{fig:link-divergences}%
    \caption{Relationships between the Fisher divergence, the KL divergence, the MMD, and the KSD~\citep{liu2016short}.}
\end{figure}

\clearpage
\section{Experimental Results}

This section contains visualizations of all experiments discussed in \ref{sec:experiments}, including figures contained in the main text.
In all experiments we set the significance level to $\alpha = 0.05$.
Every experiment is repeated for 100 randomly sampled datasets and with 500 bootstrap iterations for estimating the quantile of the test statistic.

We use Gaussian distributions and compare the KCCSD and the SKCE with different combinations of kernels.
For the KCCSD, for Gaussian distributions all considered test statistics can be evaluated exactly.
Alternatively, for the exponentiated (kernelized) Fisher kernel and the exponentiated MMD kernel one can resort to approximations using samples from the base measure.
For the SKCE, however, the test statistic can be evaluated exactly on in special cases such as Gaussian kernels on the target space.
All approximate evaluations are performed with 10 samples.

\subsection{Mean Gaussian Model}

\begin{figure}[!htb]
    \centering
    \includegraphics[width=\linewidth]{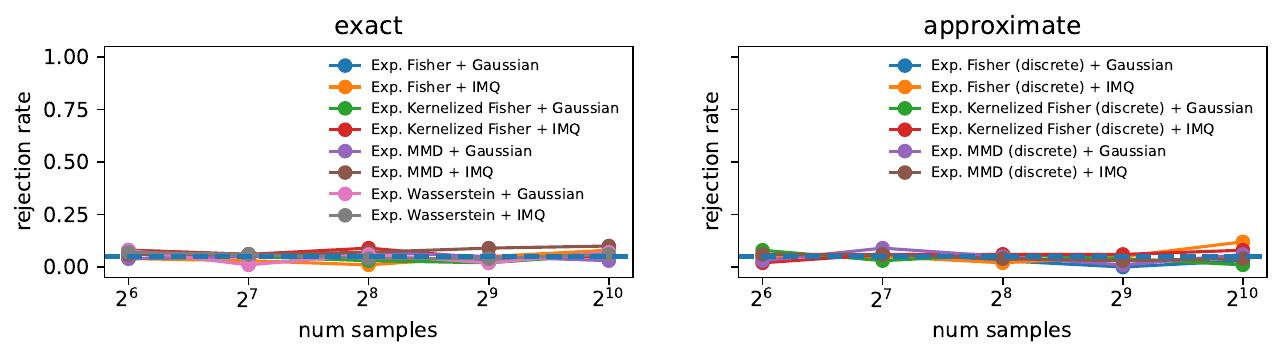}
    \caption{False rejection rate of the KCCSD for MGM ($\delta = 0$).}
\end{figure}

\begin{figure}[!htb]
    \centering
    \includegraphics[width=\linewidth]{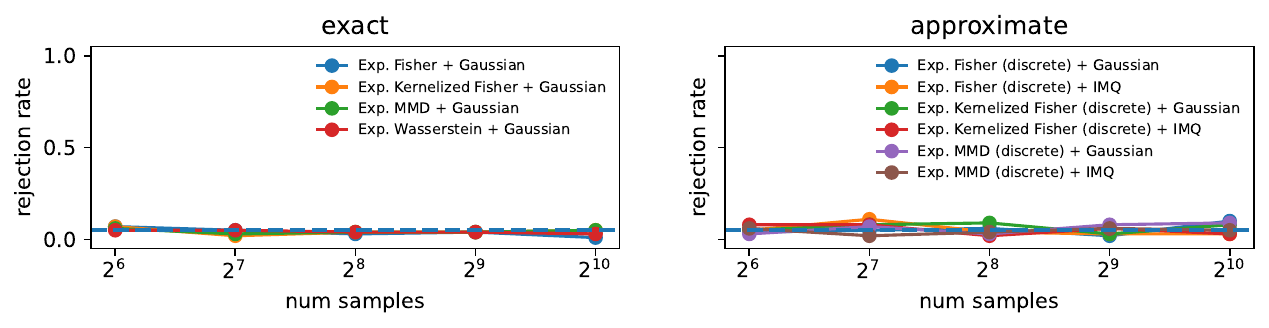}
    \caption{False rejection rate of the SKCE for MGM ($\delta = 0$).}
    \label{fig:mgm_skce}
\end{figure}

\begin{figure}[!htb]
    \centering
    \includegraphics[width=\linewidth]{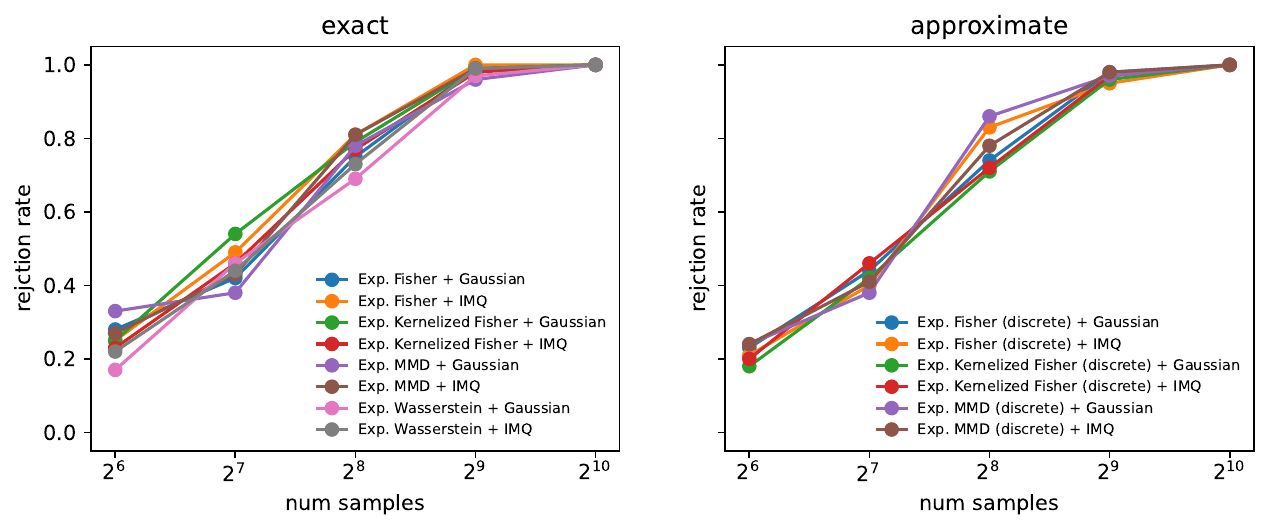}
    \caption{Rejection rate of the KCCSD for MGM ($\delta = 0.1$, $c = \mathbf{1}_d$).}
    \label{fig:pmgm_kccsd_all}
\end{figure}

\begin{figure}[!htb]
    \centering
    \includegraphics[width=\linewidth]{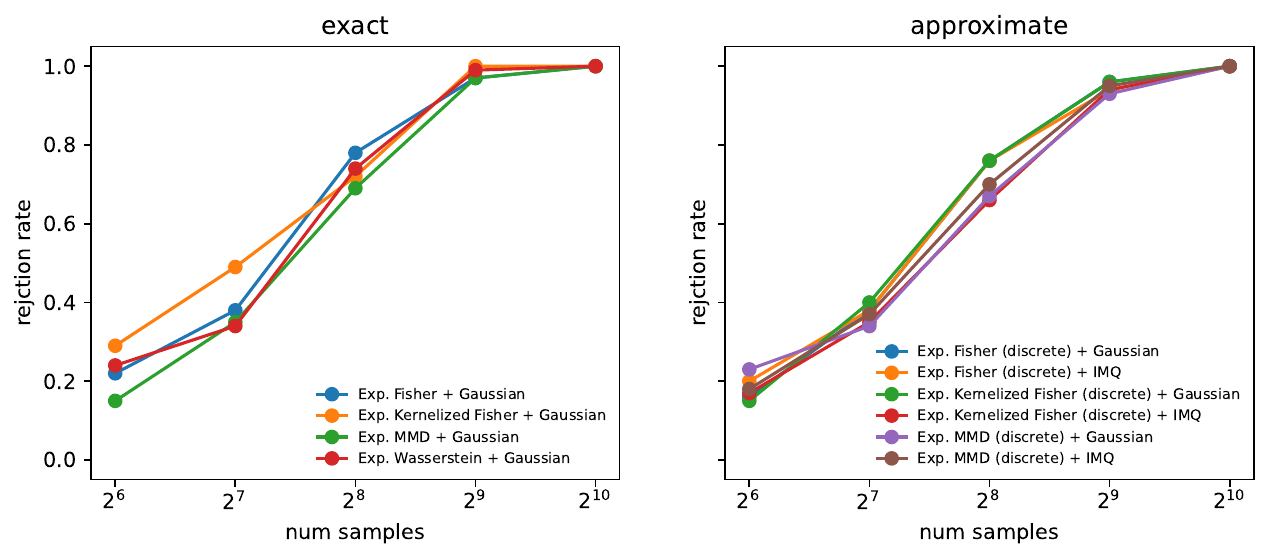}
    \caption{Rejection rate of the SKCE for MGM ($\delta = 0.1$, $c = \mathbf{1}_d$).}
    \label{fig:pmgm_skce_all}
\end{figure}

\begin{figure}[!htb]
    \centering
    \includegraphics[width=\linewidth]{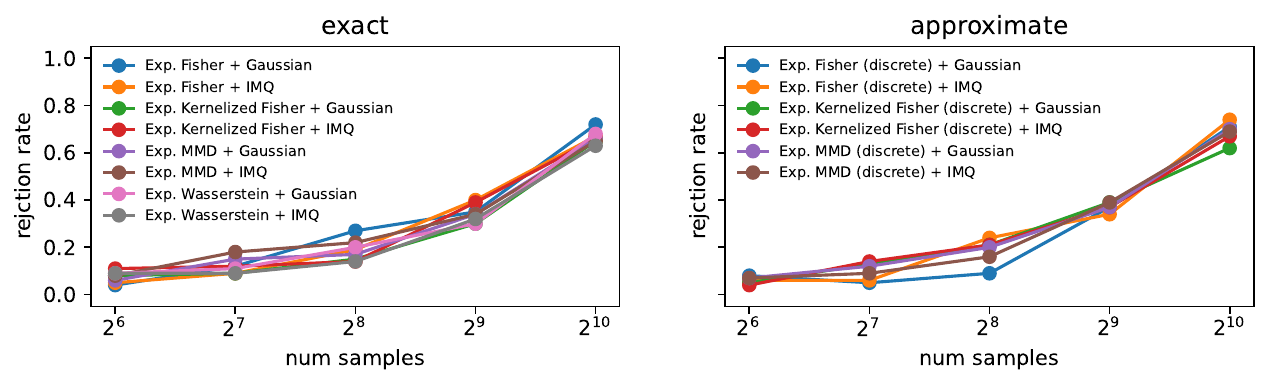}
    \caption{Rejection rate of the KCCSD for MGM ($\delta = 0.1$, $c = e_1$).}
\end{figure}

\begin{figure}[!htb]
    \centering
    \includegraphics[width=\linewidth]{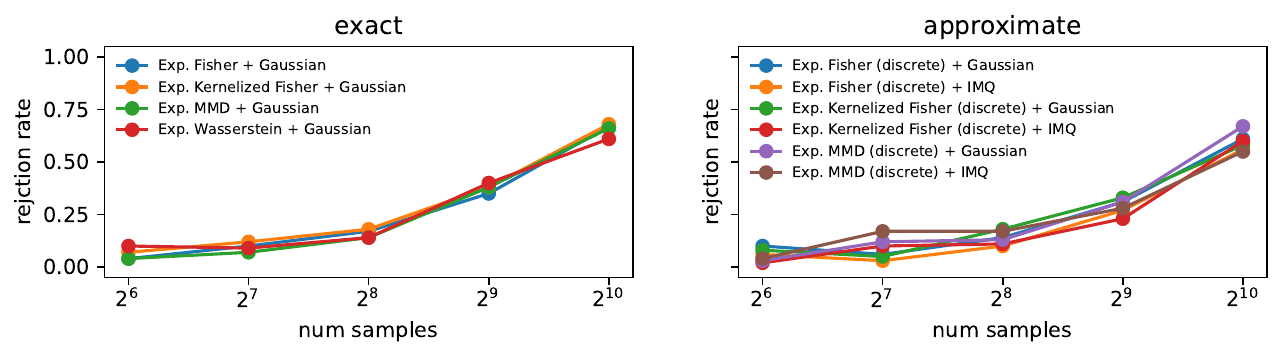}
    \caption{Rejection rate of the SKCE for MGM ($\delta = 0.1$, $c = e_1$).}
    \label{fig:pmgm_skce_first}
\end{figure}

\FloatBarrier
\subsection{Linear Gaussian Model}

\begin{figure}[!htb]
    \centering
    \includegraphics[width=\linewidth]{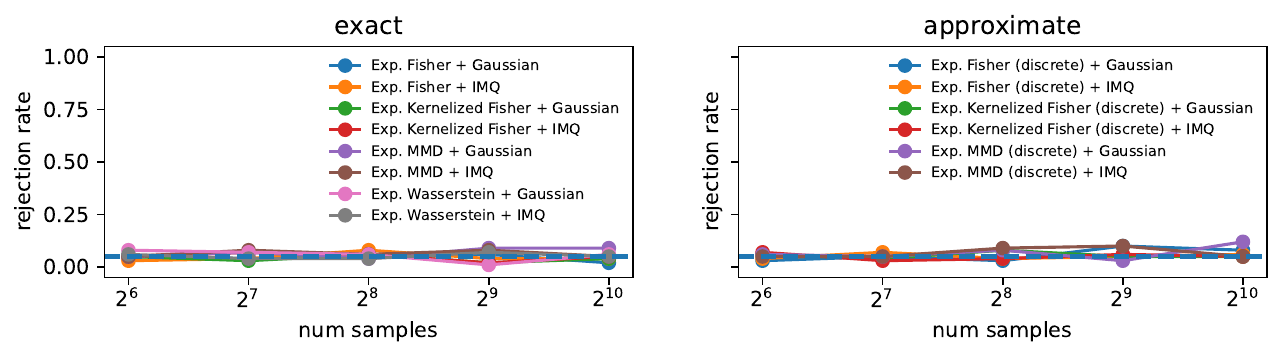}
    \caption{False rejection rate of the KCCSD for LGM ($\delta = 0$).}
    \label{fig:lgm_kccsd}
\end{figure}

\begin{figure}[!htb]
    \centering
    \includegraphics[width=\linewidth]{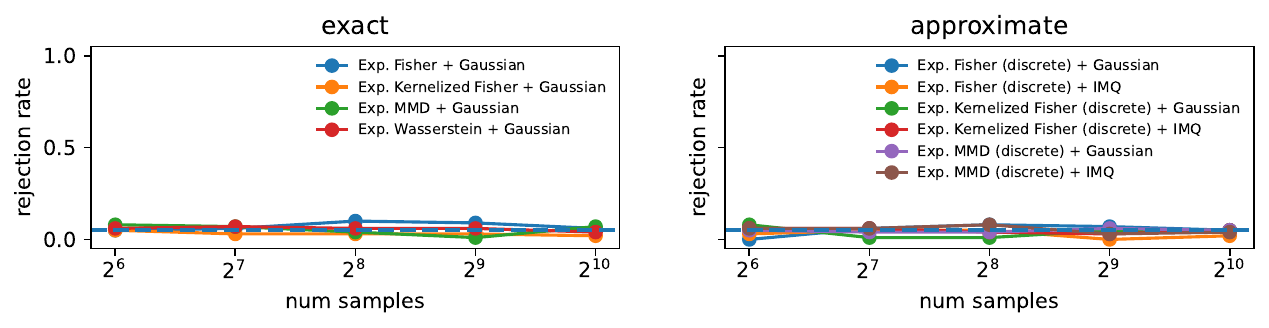}
    \caption{False rejection rate of the SKCE for LGM ($\delta = 0$).}
    \label{fig:lg_skce}
\end{figure}

\FloatBarrier
\subsection{Heteroscedastic Gaussian Model}

\begin{figure}[!htb]
    \centering
    \includegraphics[width=\linewidth]{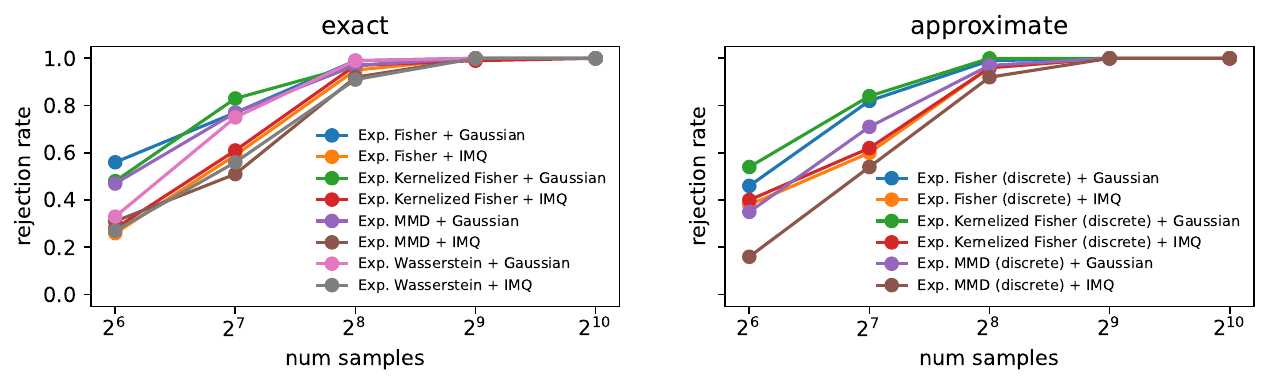}
    \caption{Rejection rate of the KCCSD for HGM ($\delta = 1$).}
    \label{fig:hgm_kccsd}
\end{figure}

\begin{figure}[!htb]
    \centering
    \includegraphics[width=\linewidth]{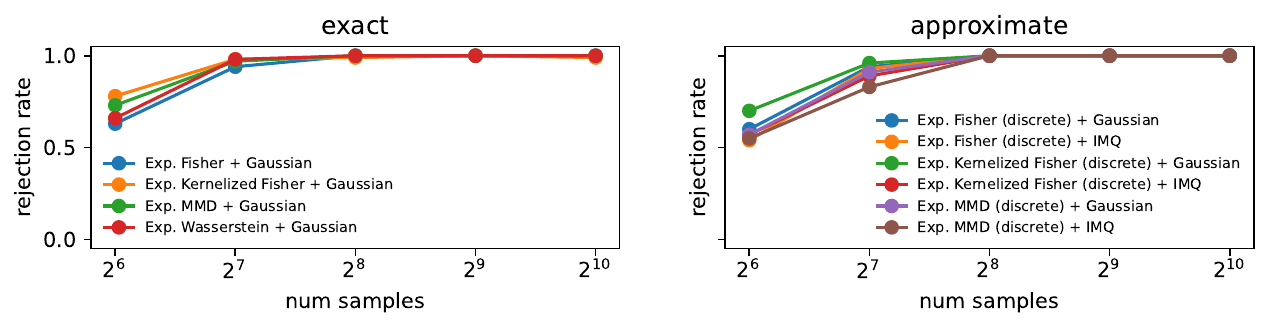}
    \caption{Rejection rate of the SKCE for HGM ($\delta = 1$).}
    \label{fig:hgm_skce}
\end{figure}

\FloatBarrier
\subsection{Quadratic Gaussian Model}

\begin{figure}[!htb]
    \centering
    \includegraphics[width=\linewidth]{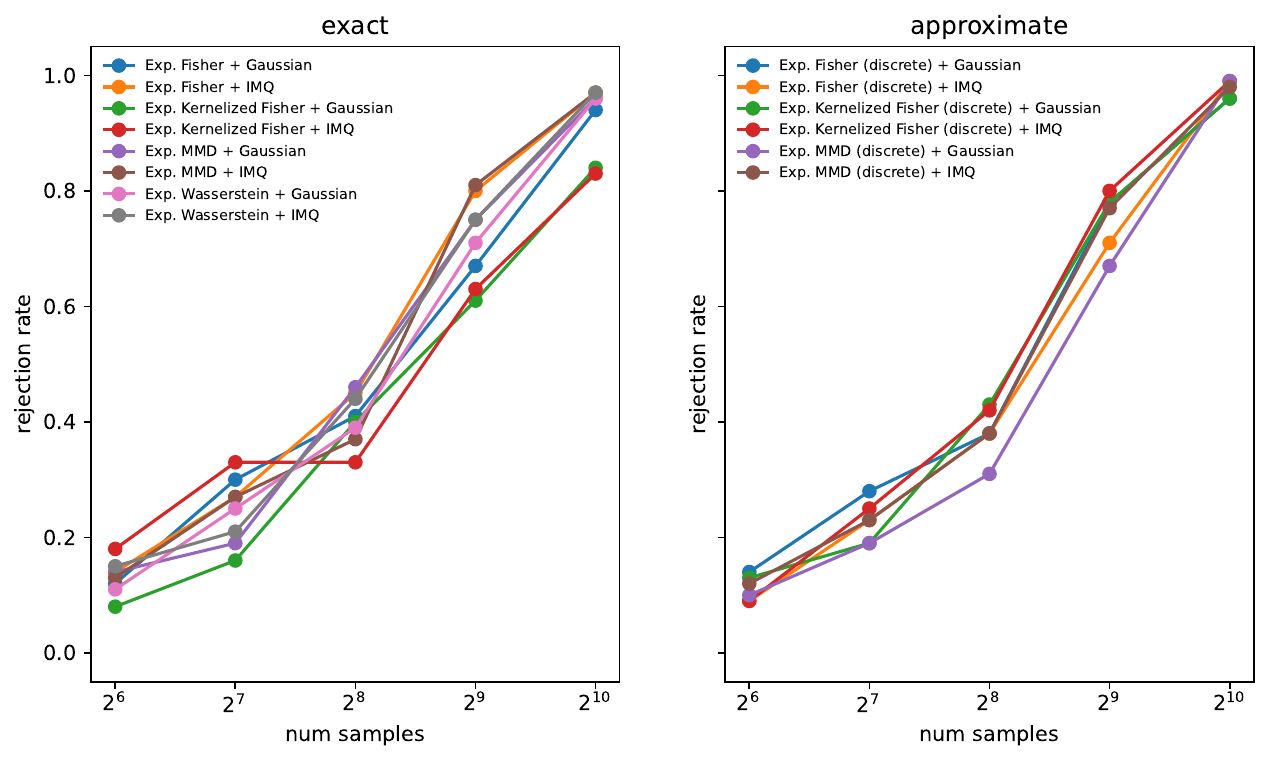}
    \caption{Rejection rate of the KCCSD for QGM ($\delta = 1$).}
    \label{fig:qgm_kccsd}
\end{figure}

\begin{figure}[!htb]
    \centering
    \includegraphics[width=\linewidth]{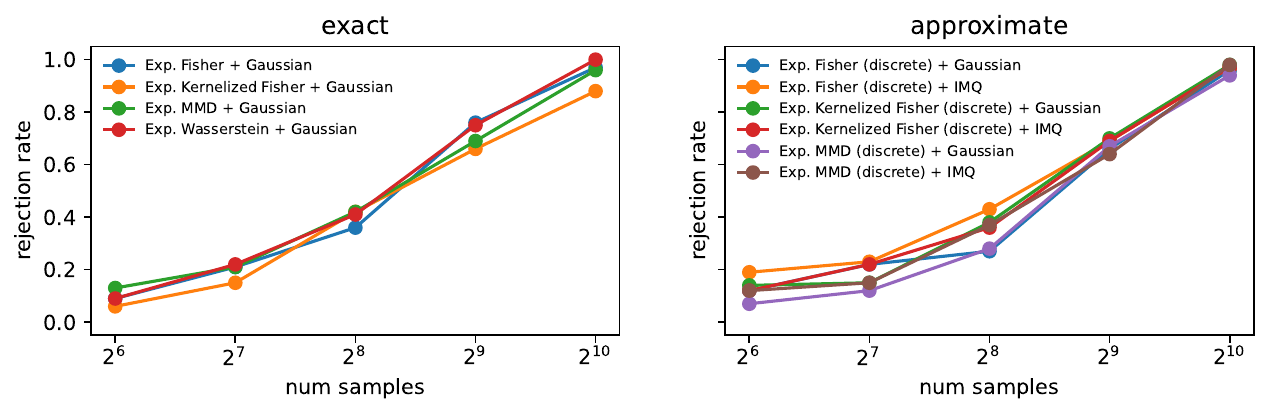}
    \caption{Rejection rate of the SKCE for QGM ($\delta = 1$).}
    \label{fig:qgm_skce}
\end{figure}

\FloatBarrier

\end{document}